\LetLtxMacro\amsproof\proof
\LetLtxMacro\amsendproof\endproof
  \LetLtxMacro\proof\amsproof
  \LetLtxMacro\endproof\amsendproof
\newtheorem{theorem}{Theorem}
\newtheorem{definition}{Definition}
\newtheorem{lemma}[theorem]{Lemma}
\newtheorem{corollary}[theorem]{Corollary}
\newtheorem{conjecture}[theorem]{Conjecture}
\newtheorem{example}[theorem]{Example}
\newtheorem*{theorem*}{Theorem}
\newtheorem*{remark*}{Remark}
\xpatchcmd{\proof}{\itshape}{\normalfont\proofnamefont}{}{}
\DeclarePairedDelimiter\ceil{\lceil}{\rceil}
\newcommand{\naturals}{\mathbb{N}}
\definecolor{darkgreen}{rgb}{0,0.5,0}
\definecolor{purple}{rgb}{1,0,1}
\newcommand{\kibitz}[2]{\ifnum\Comments=1\textcolor{#1}{#2}\fi}
\newcommand{\Acal}{\mathcal{A}}
\newcommand{\Hcal}{\mathcal{H}}
\newcommand{\Tcal}{\mathcal{T}}
\newcommand{\Xcal}{\mathcal{X}}
\newcommand{\Ycal}{\mathcal{Y}}
\newcommand{\Scal}{\mathcal{S}}
\DeclareMathOperator*{\argmin}{arg\,min}
\newcommand{\expect}{\operatorname{\mathbb{E}}}
\newcommand{\indicator}{\mathbbm{1}}
\newcommand*\samethanks[1][\value{footnote}]{\footnotemark[#1]}
\title{Online Learning with Set-valued Feedback}
\author{Vinod Raman\thanks{Authors contributed equally}, Unique Subedi\samethanks , and Ambuj Tewari}
\affil{Department of Statistics, University of Michigan}
\affil{\texttt{\{vkraman, subedi, tewaria\}@umich.edu}}
\date{}
\begin{document}

\maketitle

\begin{abstract}%

\noindent We study a variant of online multiclass classification where the learner predicts a single label but receives a \textit{set of labels} as feedback. In this model, the learner is penalized for not outputting a label contained in the revealed set. We show that unlike online multiclass learning with single-label feedback, deterministic and randomized online learnability are \textit{not equivalent} even in the realizable setting with set-valued feedback. Accordingly, we give two new combinatorial dimensions, named the Set Littlestone and Measure Shattering dimension, that tightly characterize deterministic and randomized online learnability respectively in the realizable setting. In addition, we show that the Measure Shattering dimension characterizes online learnability in the agnostic setting and tightly quantifies the minimax regret. Finally, we use our results to establish bounds on the minimax regret for three practical learning settings:  online multilabel ranking,  online multilabel classification, and real-valued prediction with interval-valued response.
\end{abstract}


\section{Introduction} 

In the standard online multiclass classification setting, a learner plays a repeated game against an adversary. In each round $t \in [T]$, the adversary picks a labeled example $(x_t, y_t) \in \mathcal{X} \times \mathcal{Y}$ and reveals the unlabeled example $x_t$ to the learner. The learner observes $x_t$ and then makes a prediction $\hat{y}_t \in \mathcal{Y}$. Finally, the adversary reveals the true label $y_t$ and the learner suffers the loss $\mathbbm{1}\{\hat{y}_t \neq y_t\}$ \citep{Littlestone1987LearningQW, DanielyERMprinciple}.

In practice, however, there may not be a single correct label $y\in \mathcal{Y}$, but rather, a \textit{collection} of correct labels $S\subseteq \mathcal{Y}$. For example, in online multilabel ranking,  the learner is tasked with ranking a set of labels in terms of their relevance to an instance. However, as feedback, the learner only receives a bitstring indicating which of the labels were relevant. This feedback model is standard in multilabel ranking since obtaining the full ranking is generally costly \citep{liu2009learning}. Since, for any given bitstring, there can be multiple rankings that correctly place relevant labels above non-relevant labels, the learner effectively only observes a \textit{set} of correct rankings. Beyond ranking, other notable examples of set-valued feedback include multilabel classification with a thresholded Hamming loss, where the learner is only penalized after misclassifying a certain number of labels, and real-valued prediction where the response is an interval on the real line \citep{diamond1990least, gil2002least, huber2009interval}. Even more generally, one can equivalently represent the ground truth label as a collection of elements from the prediction space for any learning problem with the 0-1 loss where there is an asymmetry between the prediction and label space. 

Motivated by online multilabel ranking and other natural learning problems, we study a variant of online multiclass classification where in each round $t \in [T]$, the learner still predicts a single label $\hat{y}_t \in \mathcal{Y}$, but the adversary reveals a set of correct labels $S_t \in \mathcal{S}(\mathcal{Y})$, where $\mathcal{S}(\mathcal{Y})\subseteq 2^{\mathcal{Y}}$ is an arbitrary set system. The learner suffers a loss if and only if $\hat{y}_t \notin S_t$. Given a hypothesis class $\mathcal{H} \subseteq \mathcal{Y}^{\mathcal{X}}$, the goal of the learner is to output predictions such that its regret, the difference between its cumulative loss and the cumulative loss of the best-fixed hypothesis in hindsight, is small. The class $\mathcal{H}$ is said to be online learnable if there exists an online learning algorithm whose regret is a sublinear function of the time horizon $T$. 

Given a learning problem $(\mathcal{X}, \mathcal{Y}, \mathcal{S}(\mathcal{Y}), \mathcal{H})$, what are necessary and sufficient conditions for $\mathcal{H}$ to be online learnable? For example, under single-label feedback (multiclass classification), the online learnability of a hypothesis class $\mathcal{H} \subseteq \mathcal{Y}^{\mathcal{X}}$ is characterized by the finiteness of a combinatorial parameter called the Littlestone dimension \citep{Littlestone1987LearningQW,  ben2009agnostic, DanielyERMprinciple}.  Analogously, is there a combinatorial parameter that characterizes online learnability under set-valued feedback? Motivated by these questions, we make the following contributions.


\begin{itemize}
    \item[(1)] We show that under set-valued feedback, deterministic and randomized learnability are \textit{not equivalent} even in the realizable setting. This is in contrast to online learning with single-label feedback, where there is no separation between deterministic and randomized realizable learnability \citep{Littlestone1987LearningQW, DanielyERMprinciple}. Additionally, we show deterministic and randomized realizable learnability are equivalent if the \emph{Helly number}, a parameter that arises in combinatorial geometry, of $\mathcal{S}(\mathcal{Y})$ is finite. 

    \item[(2)] In light of this separation, we give two new combinatorial dimensions,  the Set Littlestone and Measure shattering dimension,  and show that they characterize deterministic and randomized realizable learnability respectively.

    \item[(3)] Moving beyond the realizable setting, we show that the Measure Shattering dimension continues to characterize \textit{agnostic} learnability. This implies an equivalence between randomized realizable learnability and agnostic learnability. 
    \item[(4)] Finally, as applications, we use our results to bound the minimax expected regret for three practical learning settings:  online multilabel ranking, online multilabel classification, and real-valued prediction with interval-valued response. 
\end{itemize}

To prove the separation in (1), we identify a learning problem where every deterministic learner fails, but there exists a simple randomized learner. As for our combinatorial dimensions in (2),  the Set Littlestone and Measure shattering dimensions are defined using complete trees with \textit{infinite-width}. This is in contrast to much of the existing combinatorial dimensions in online learning. To prove that the Set Littlestone dimension is sufficient for deterministic realizable learnability, we extend the Standard Optimal Algorithm for single-label to set-valued feedback. On the other hand, to prove that the Measure shattering dimension is sufficient for randomized realizable learnability, we adapt the recent algorithmic chaining technique from \cite{daskalakis2022fast}. Lastly, our construction of an agnostic learner in (3) uses a non-trivial extension of the adaptive covering technique introduced in \cite{hanneke2023multiclass}.

\subsection{Related Works}
There is a rich history of characterizing online learnability in terms of combinatorial dimensions. For example, \cite{Littlestone1987LearningQW, ben2009agnostic} proved that the Littlestone dimension characterizes online learnability in binary classification. Studying optimal randomized learnability, \cite{filmus2023optimal} proposed the Randomized Littlestone and showed that it characterizes optimal regret bounds for randomized learners in the realizable setting. \cite{DanielyERMprinciple, hanneke2023multiclass} show that the Littlestone dimension continues to characterize online learnability in the multiclass classification setting. Recent work by \cite*{moran2023list} showed that a modification of the Littlestone dimension characterizes \textit{list online classification}, the ``flip" of our setting where the learner outputs a set of labels, but the adversary reveals a single label. In addition, \cite{daniely2013price} showed that the Bandit Littlestone dimension characterizes online learnability when the adversary can output a set of correct labels, however, the learner only observes the indication of whether their predicted label was in the set or not. 
Moreover, there is a growing literature on online multiclass learning with feedback graphs \citep{van2021beyond, alon2015online}. In this setting, the learner predicts a single label but observes the losses of a specific set of labels determined by an arbitrary directed feedback graph. 
Finally, the Helly number \cite{helly1923mengen} has previously been used to characterize proper learning in both online and PAC settings \citep{hanneke2021online, pmlr-v125-bousquet20a} and has also appeared in the literature on distributed learning \citep{kane2019communication}.

\subsection{Relation to List Online Classification}
List online classification, studied by \cite{moran2023list}, is intimately related to online classification with set-valued feedback. Indeed, online classification with set-valued feedback is equivalent to a modified list online classification game, where in each round $t \in [T]$:  (1) the learner picks a label $\hat{y}_t \in \mathcal{Y}$  and constructs a list $\hat{L}_t \subset \mathcal{S}(\mathcal{Y})$ such that $\hat{y}_t \in S$ for every $S\in \hat{L}_t$, (2) Nature reveals the true set $S_t \in S(\mathcal{Y})$, and (3) the learner suffers the loss $\mathbbm{1}\{S_t \notin \hat{L}_t\} \geq \mathbbm{1}\{\hat{y}_t \notin S_t\}$. However, there are important differences between this ``modified" list online classification game and the ``original" list online classification game proposed by \cite{moran2023list} when taking $S(\mathcal{Y})$ to be the label space. First, in the ``original" list online classification game, the learner is allowed to output \emph{any} finite list of elements in $S(\mathcal{Y})$. This is not the case with the ``modified" list online classification game. Indeed, the ``modified" list online learner is required to pick any sequence of elements in $S(\mathcal{Y})$ whose sequence-wise intersection is not empty. This means that the ``modified" list online classification game can be harder than the ``original" list online classification game, for example, when $S(\mathcal{Y})$ contains all disjoint sets. On the other hand, the  ``original" list online classification game can also be harder than the ``modified" list online classification game, for example, when $\bigcap_{S \in S(\mathcal{Y})}S \neq \emptyset$. These statements are true even when the sets $S_t \in S(\mathcal{Y})$ are all finite. Therefore, the ``modified" and ``original" list online classification game with label space $S(\mathcal{Y})$ are incomparable.

\section{Preliminaries}
\subsection{Notation}
Let $\Xcal$ denote the instance space and $(\Ycal, \sigma(\Ycal))$ be a measurable label space. Let $\Pi(\Ycal)$ denote the set of all probability measures on ($\Ycal, \sigma(\Ycal))$.  In this paper, we consider the case where $\mathcal{Y}$ can be unbounded (e.g. $\mathcal{Y} = \mathbbm{N}$). Given a measurable label space $(\Ycal, \sigma(\Ycal))$, let $\mathcal{S}(\mathcal{Y}) \subseteq \sigma(\Ycal)$ denote an arbitrary, measurable collection of subsets of $\mathcal{Y}$. For any set $S \in \mathcal{S}(\mathcal{Y})$, we let $S^c = \mathcal{Y} \setminus S$ denote its complement. Let $\mathcal{H} \subseteq \mathcal{Y}^{\mathcal{X}}$ denote an arbitrary hypothesis class consisting of predictors $h: \mathcal{X} \rightarrow \mathcal{Y}$. Finally, we let $[N] := \{1, 2, \ldots, N\}$.

\subsection{Online Learning} \label{sec:prelim_onl}
In the online setting, an adversary plays a sequential game with the learner over $T$ rounds. In each round $t \in [T]$, an adversary selects a labeled instance $(x_t, S_t) \in \mathcal{X} \times \mathcal{S}(\mathcal{Y})$ and reveals $x_t$ to the learner. The learner makes a potentially randomized prediction $\hat{y}_t \in \mathcal{Y}$. Finally, the adversary reveals the set $S_t$, and the learner suffers the loss $\mathbbm{1}\{\hat{y}_t \notin S_t\}$. Given a hypothesis class  $\mathcal{H} \subseteq \mathcal{Y}^{\mathcal{X}}$, the goal of the learner is to output predictions $\hat{y}_t$ such that its cumulative loss is close to the best possible cumulative loss over hypotheses in $\mathcal{H}$. Before we define online learnability, we provide formal definitions of deterministic and randomized online learning algorithms. 


\begin{definition}[Deterministic Online Learner]{}
\noindent A deterministic online  learner is a deterministic mapping $\Acal : (\Xcal \times \Scal(\Ycal))^{\star} \times \Xcal \to \Ycal$ that maps past examples  and the newly revealed instance $x \in \Xcal$ to a label $y \in \Ycal$. 
\end{definition}

\begin{definition}[Randomized Online Learner]{}
 \noindent A randomized online learner is a \emph{deterministic} mapping $\Acal : (\Xcal \times \Scal(\Ycal))^{\star} \times \Xcal \to \Pi(\Ycal)$ that maps past examples  and the newly revealed instance $x \in \Xcal$ to a probability distribution $\hat{\mu} \in \Pi(\Ycal)$. The learner then randomly samples a label $\hat{y} \sim \hat{\mu}$ to make a prediction.
\end{definition}

\noindent We typically use $\Acal(x)$ to denote the prediction of $\Acal$ on $x$. When $\mathcal{A}$ is randomized, we  use $\Acal(x)$ to denote the random sample $\hat{y}$ drawn from the distribution that $\Acal$ outputs.


A hypothesis class is said to be online learnable if there exists an online learning algorithm, either deterministic or randomized, whose (expected) cumulative loss, on any sequence of labeled examples, $(x_1, S_1), ..., (x_T, S_T)$, is not too far from that of best-fixed hypothesis in hindsight.

\begin{definition} [Online Agnostic Learnability]\label{def:agnOL} \noindent 
 A hypothesis class $\Hcal \subseteq \Ycal^{\Xcal}$ is online learnable in the agnostic setting if there exists a (potentially randomized) algorithm $\mathcal{A}$ such that 
 its \emph{expected regret}
$$\emph{\texttt{R}}_{\mathcal{A}}(T, \mathcal{H}) := \sup_{(x_1, S_1), ..., (x_T, S_T)} \left(\mathbb{E}\left[\sum_{t=1}^T \mathbbm{1}\{\mathcal{A}(x_t) \notin S_t\}\right] - \inf_{h \in \mathcal{H}}\sum_{t=1}^T  \mathbbm{1}\{h(x_t) \notin  S_t \} \right)$$

\noindent is a non-decreasing, sub-linear function of $T$. 
\end{definition}



\noindent  A sequence of labeled examples $\{(x_t, S_t)\}_{t=1}^T$ is said to be \textit{realizable} by $\Hcal$ if there exists a hypothesis $h^{\star} \in \Hcal$ such that $h^{\star}(x_t) \in S_t$ for all $t \in [T]$.
In such case, we have $ \inf_{h \in \Hcal} \sum_{t=1}^T  \mathbbm{1}\{h(x_t) \notin  S_t \} = 0$. 

\begin{definition}[Online Realizable Learnability]\label{realOL}
\noindent A hypothesis class $\Hcal \subseteq \Ycal^{\Xcal}$ is online learnable in the realizable setting if there exists a (potentially randomized) algorithm $\mathcal{A}$ such that 
its \emph{expected number of mistakes} 
$$\emph{\texttt{M}}_{\mathcal{A}}(T, \mathcal{H}) := \sup_{\substack{(x_1, S_1), ..., (x_T, S_T)\\ \exists h^{\star} \in \mathcal{H} \text{ such that } h^{\star}(x_t) \in S_t} }\mathbb{E}\left[\sum_{t=1}^T \mathbbm{1}\{\mathcal{A}(x_t) \notin S_t\}\right]$$
is a non-decreasing, sub-linear function of $T$. 
\end{definition}

One may analogously define a slightly restricted notion of deterministic realizable learnability by restricting the algorithm $\Acal$ to be deterministic. 

\section{Combinatorial Dimensions}

In online learning theory, combinatorial dimensions are often defined in terms of \textit{trees}, a basic unit that captures temporal dependence. Accordingly, we start this section by formally defining the notion of a tree. 

 Given an instance space $\mathcal{X}$ and a (potentially uncountable) set of objects $\mathcal{M}$, an $\mathcal{X}$-valued, $\mathcal{M}$-ary tree $\mathcal{T}$ of depth $T$ is a complete rooted tree such that each internal node $v$ is labeled by an instance $x \in \mathcal{X}$ and  for every internal node $v$ and object $m \in \mathcal{M}$, there is an outgoing edge $e^m_{v}$  indexed by $m$. We can mathematically represent this tree by a sequence $(\mathcal{T}_1, ..., \mathcal{T}_T)$ of labeling functions $\mathcal{T}_t:\mathcal{M}^{t-1} \rightarrow \mathcal{X}$ which provide the labels for each internal node. A path of length $T$ down the tree is given be a sequence of objects $m = (m_1,..., m_T) \in \mathcal{M}^T$. Then, $\mathcal{T}_t(m_1, ..., m_{t-1})$ gives the label of the node by following the path $(m_1, ..., m_{t-1})$ starting from the root node, going down the edges indexed by the $m_t$'s.  We let $\mathcal{T}_1 \in \mathcal{X}$ denote the instance labeling the root node. For brevity, we define $m_{<t} = (m_1, ..., m_{t-1})$ and therefore write $\mathcal{T}_t(m_1, ..., m_{t-1}) = \mathcal{T}_t(m_{<t})$. Analogously, we let $m_{\leq t} = (m_1, ..., m_{t})$.

Often, it is useful to label the edges of a tree with some \textit{auxiliary} information. Given an $\mathcal{X}$-valued, $\mathcal{M}$-ary tree $\mathcal{T}$ of depth $T$ and a (potentially uncountable) set of objects $\mathcal{N}$, we can formally label the edges of $\mathcal{T}$ using objects in $\mathcal{N}$ by considering a sequence $(f_1, ..., f_T)$ of edge-labeling functions  $f_t: \mathcal{M}^{t} \rightarrow \mathcal{N}$. For each depth $t \in [T]$, the function $f_t$ takes as input a path $m_{\leq t}$ of length $t$ and outputs an object in $\mathcal{N}$. Accordingly, we can think of the object $f_t(m_{\leq t})$ as labeling the edge indexed by $m_t$ after following the path $m_{< t}$ down the tree. We now use this notation to rigorously define existing combinatorial dimensions in online learning.

We begin with the Littlestone dimension, which is known to characterize binary/multiclass online classification, where $\Scal(\Ycal) =\{\{y\} : y \in \Ycal\}.$ 

\begin{definition} [Littlestone dimension \citep{Littlestone1987LearningQW, DanielyERMprinciple}]\label{def:ldim}
\noindent Let $\mathcal{T}$ be a complete, $\mathcal{X}$-valued , $\{\pm1\}$-ary tree of depth $d$. The tree $\mathcal{T}$ is shattered by $\mathcal{H} \subseteq \Ycal^{\Xcal}$  if there exists a sequence $(f_1, ..., f_d)$ of edge-labeling functions  $f_t: \{\pm 1\}^{t} \rightarrow \mathcal{Y}$  such that for every path $\sigma = (\sigma_1, ..., \sigma_d) \in \{\pm 1\}^d$, there exists a hypothesis $h_{\sigma} \in \mathcal{H}$ such that for all $t \in [d]$,  $h_{\sigma}(\mathcal{T}_t(\sigma_{<t})) = f_t(\sigma_{\leq t})$ and $f_t((\sigma_{< t}, -1)) \neq f_t((\sigma_{< t}, +1))$. The Littlestone dimension of $\mathcal{H}$, denoted $\emph{\texttt{L}}(\mathcal{H})$, is the maximal depth of a tree $\mathcal{T}$ that is shattered by $\mathcal{H}$. If there exists shattered trees of arbitrarily large depth, we say $\emph{\texttt{L}}(\mathcal{H}) = \infty$.
\end{definition}


A natural extension of the Littlestone dimension to set-valued feedback is to (1) replace the two differing labels on the edges of the Littlestone tree with two disjoint sets in $\mathcal{S}(\mathcal{Y})$ and (2) require that for every path down the tree, there is a hypothesis whose outputs on the sequence of instances lie inside the sets labeling the sequence of edges.  In fact, one can even consider trees with more than two outgoing edges. 
Such combinatorial structures have been previously studied to characterize online learnability under bandit feedback \citep{daniely2013price} and list classification \citep{moran2023list}.

Along this direction, Definition \ref{def:psldim} considers complete trees where each internal node has $p$ outgoing edges.  Each outgoing edge is labeled by a set in $\mathcal{S}(\mathcal{Y})$ with the additional constraint that the mutual intersection of the $p$ sets labeling the $p$ edges has to be empty. Finally, such a $[p]$-ary is shattered if for every root-to-leaf path down the tree, there exists a hypothesis whose outputs on the sequence of instances lie in the sets labeling the edges along the sequence. 

\begin{definition}[$p$-Set Littlestone dimension]\label{def:psldim}
\noindent Let $\mathcal{T}$ be a complete $\mathcal{X}$-valued, $\left[p\right]$-ary tree of depth $d$. The tree $\mathcal{T}$ is shattered by $\mathcal{H} \subseteq \Ycal^{\Xcal}$  if there exists a sequence $(f_1, ..., f_d)$ of edge-labeling set-valued functions  $f_t: \left[p\right]^{t} \rightarrow \mathcal{S}(\mathcal{Y})$  such that for every path $q = (q_1, ..., q_d) \in \left[p\right]^d$, we have $\bigcap_{i \in [p]} f_t((q_{<t}, i)) = \emptyset$ and there exists a hypothesis $h_{q} \in \mathcal{H}$ such that $h_{q}(\mathcal{T}_t(q_{<t})) \in f_t(q_{\leq t})$ for all $t \in [d]$,. The $p$-Set Littlestone dimension of $\mathcal{H}$  denoted $\emph{\texttt{SL}}_{p}(\mathcal{H}, \mathcal{S}(\mathcal{Y}))$, is the maximal depth of a tree $\mathcal{T}$ that is shattered by $\mathcal{H}$. If there exists shattered trees of arbitrarily large depth, we say $\emph{\texttt{SL}}_{p}(\mathcal{H}, \mathcal{S}(\mathcal{Y})) = \infty$. 
\end{definition}

 When it is clear from context, we drop the dependence of $\Scal(\Ycal)$ and only write $\texttt{SL}_p(\mathcal{H})$. Note that if $p_1 > p_2$, then $\texttt{SL}_{p_1}(\mathcal{H}) \geq \texttt{SL}_{p_2}(\mathcal{H})$. It is not too hard to see that the finiteness of $\texttt{SL}_{p}(\mathcal{H})$ for every $p \geq 2$ is a necessary condition for online learnability. For many natural problems (see Theorem \ref{thm:relation} and Section \ref{sec:app}), the finiteness of $\texttt{SL}_{p}(\mathcal{H})$ for every $p \geq 2$ is also sufficient for online learnability. However, Example \ref{exm:counter} shows that the finiteness of $\texttt{SL}_{p}(\mathcal{H})$ for every $p \geq 2$ is actually not sufficient.

 
 \begin{example} \label{exm:counter} \noindent Let $\mathcal{Y} = \mathbb{N}$,  $\mathcal{S}(\mathcal{Y}) = \{A^c: A \subset \mathbb{N}, |A| < \infty\}$, and suppose $\mathcal{H} = \{x \mapsto y: y \in \mathcal{Y}\}$ is the class of constant functions. First, we claim that $\emph{\texttt{SL}}_{p}(\mathcal{H}) = 0$ for all $p \geq 2$. Fix $p \geq 2$ and let  $S_1, ..., S_p \in \mathcal{S}(\mathcal{Y})$ denote an arbitrary sequence of $p$ sets.   For each $i \in [p]$, let $A_i$ be the finite set such that $S_i = A^c_i$. Then, $\bigcap_{i=1}^p S_i  = \bigcap_{i=1}^p A_i^c = \left(\bigcup_{i=1}^p A_i\right)^c \neq \emptyset$ since $|\bigcup_{i=1}^p A_i| < \infty$. Thus, $\emph{\texttt{SL}}_p(\mathcal{H}) = 0$  because it is not possible to find $p$ sets in $\mathcal{S}(\mathcal{Y})$ whose mutual intersection is empty. Since $p$ is arbitrary, this is true for every $p \geq 2$. Next, we claim that $\mathcal{H}$ is not online learnable. This follows from the fact that for every $\varepsilon \in [0, 1]$ and measure $\mu \in \Pi(\mathcal{Y})$, there exists a finite set $A_{\mu} \subset \mathbb{N}$ such that $\mu(A_{\mu}) \geq \varepsilon$. Suppose for the sake of contradiction this is not true. That is, there exists an $\varepsilon \in [0, 1]$ and a measure $\mu_{\varepsilon} \in \Pi(\mathcal{Y})$ such that for all finite sets $A \subset \mathbb{N}$, we have $\mu_{\varepsilon}(A)  < \varepsilon$. For every $i \in \mathbb{N}$, let $N_i = \{1, 2, ..., i\}$ denote the first $i$ natural numbers. Note that $\mu_{\varepsilon}(N_i) < \varepsilon$ and that $\{N_i\}_{i \in \mathbb{N}}$ is a monotone increasing sequence of finite sets such that $\lim_{i \rightarrow \infty}N_i = \mathbb{N}$. Therefore, we have that $1 = \mu_{\varepsilon}(\mathbb{N}) = \mu_{\varepsilon}(\lim_{i \rightarrow \infty}N_i) = \lim_{i \rightarrow \infty}\mu_{\varepsilon}(N_i) < \varepsilon$, a contradiction. Accordingly, for any $\varepsilon \in [0, 1]$, no matter what measure $\hat{\mu}_t$ the algorithm picks to make its prediction in round $t$, there always exists a finite set $A_{\hat{\mu}_t}$ such that $\hat{\mu}_t(A_{\hat{\mu}_t}) \geq \varepsilon$. Since $|A_{\hat{\mu}_t}| < \infty$, we know that $A_{\hat{\mu}_t}^c \in \mathcal{S}(\mathcal{Y})$. Thus, there is always a strategy for the adversary to force the learner's expected loss to be at least $\varepsilon$ in each round $t \in [T]$. On the other hand, since for any sequence of sets $S_1, ..., S_T \in \mathcal{S}(\mathcal{Y})$, we have that $\cap_{t=1}^T S_t \neq \emptyset$, there exists a hypothesis $h_y \in \mathcal{H}$ such that $h_y(x) \in S_t$ for all $x \in \mathcal{X}$ and $t \in [T]$. Thus, every stream is realizable by $\mathcal{H}$. Accordingly, for every $\varepsilon \in [0, 1]$, the expected regret of any online learner in the realizable setting is at least $\varepsilon T$. 
 \end{example}
 
Example \ref{exm:counter} shows that, in full generality, one might need to go beyond trees with finite width in order to characterize online learnability with set-valued feedback. Using this observation, we define two new combinatorial dimensions, the Set Littlestone and Measure shattering dimension, whose associated trees can have infinite-width. In Section \ref{sec:real}, we show that the Set Littlestone dimension (SLdim) tightly characterizes the online learnability of $\mathcal{H}$ by any \textit{deterministic} online learner in the \textit{realizable} setting.

\begin{definition}[Set Littlestone dimension]\label{def:sldim}
\noindent Let $\mathcal{T}$ be a complete $\mathcal{X}$-valued, $\mathcal{Y}$-ary tree of depth $d$. The tree $\mathcal{T}$ is shattered by $\mathcal{H} \subseteq \Ycal^{\Xcal}$  if there exists a sequence $(f_1, ..., f_d)$ of edge-labeling set-valued functions  $f_t: \mathcal{Y}^{t} \rightarrow \mathcal{S}(\mathcal{Y})$  such that for every path $y = (y_1, ..., y_d) \in \mathcal{Y}^d$, we have $y_t \notin f_t(y_{\leq t})$ and there exists a hypothesis $h_{y} \in \mathcal{H}$ such that  $h_{y}(\mathcal{T}_t(y_{<t})) \in f_t(y_{\leq t})$ for all $t \in [d]$. The Set Littlestone dimension of $\mathcal{H}$, denoted  $\emph{\texttt{SL}}(\mathcal{H}, \mathcal{S}(\mathcal{Y}))$, is the maximal depth of a tree $\mathcal{T}$ that is shattered by $\mathcal{H}$. If there exists shattered trees of arbitrarily large depth, we say $\emph{\texttt{SL}}(\mathcal{H}, \mathcal{S}(\mathcal{Y})) = \infty$.
\end{definition}

 On the other hand, we show that the Measure Shattering dimension  (MSdim) characterizes the online learnability of $\mathcal{H}$ by any \textit{randomized} online learner in both the realizable and agnostic settings under set-valued feedback. We note that the Measure Shattering dimension is similar to the sequential fat-shattering dimension in the sense that it is a \textit{scale-sensitive}, and therefore defined at every $\gamma > 0$. 

\begin{definition}[Measure Shattering dimension]\label{def:msdim}
\noindent Let $\mathcal{T}$ be a complete $\mathcal{X}$-valued, $\Pi(\mathcal{Y})$-ary tree of depth $d$, and fix $\gamma \in (0,1] $. The tree $\mathcal{T}$ is $\gamma$-shattered by $\mathcal{H} \subseteq \Ycal^{\Xcal}$  if there exists a sequence $(f_1, ..., f_d)$ of edge-labeling set-valued functions  $f_t: \Pi(\mathcal{Y})^{t} \rightarrow \mathcal{S}(\mathcal{Y})$  such that for every path $\mu = (\mu_1, ..., \mu_d) \in \Pi(\mathcal{Y})^d$, we have $\mu_t(f_t(\mu_{\leq t})) \leq 1 - \gamma$ and there exists a hypothesis $h_{\mu} \in \mathcal{H}$ such that  $h_{\mu}(\mathcal{T}_t(\mu_{<t})) \in f_t(\mu_{\leq t})$  for all $t \in [d]$. The Measure Shattering dimension of $\mathcal{H}$ at scale $\gamma$, denoted $\emph{\texttt{MS}}_{\gamma}(\mathcal{H}, \mathcal{S}(\mathcal{Y}))$, is the maximal depth of a tree $\mathcal{T}$ that is $\gamma$-shattered by $\mathcal{H}$. If there exists $\gamma$-shattered trees of arbitrarily large depth, we say $\emph{\texttt{MS}}_{\gamma}(\mathcal{H}, \mathcal{S}(\mathcal{Y})) = \infty$. Analogously, we can define $ \emph{\texttt{MS}}_{0}(\mathcal{H}, \mathcal{S}(\mathcal{Y}))$ by requiring strict inequality, $ \mu_t(f_t(\mu_{\leq t})) <1 $.
\end{definition}

 As with most scale-sensitive dimensions, MSdim has a monotonicity property, namely, $\texttt{MS}_{\gamma_1}(\mathcal{H}) \leq \texttt{MS}_{\gamma_2}(\mathcal{H}) $ for any $\gamma_2 \leq \gamma_1$. This follows immediately upon noting that for any $A \in \mathcal{S}(\mathcal{Y})$, we have $\mu(A) \leq 1- \gamma_1 \leq 1-\gamma_2$. Thus, a tree shattered at scale $\gamma_1$ is also shattered at scale $\gamma_2$.


\subsection{Relations Between Combinatorial Dimensions}


In this section, we show how the $\text{$p$-SLdim}$, $\text{SLdim}$, and $\text{MSdim}$ are related under various conditions on the problem setting. One natural case is when the set system $\mathcal{S}(\mathcal{Y})$ has finite \textit{Helly number}, a quantification of the following property: every collection-wise disjoint sequence of sets in $\mathcal{S}(\mathcal{Y})$ contains a \textit{small} collection-wise disjoint subsequence of sets.  

\begin{definition}[Helly Number of $\mathcal{S}(\mathcal{Y})$]{}
\noindent  The Helly number of $\mathcal{S}(\mathcal{Y}) \subseteq \sigma(\mathcal{Y})$, denoted $\emph{\texttt{H}}(\mathcal{S}(\mathcal{Y}))$, is the smallest number $p \in \mathbbm{N}$ such that for any collection of sets $\mathcal{C} \subseteq \mathcal{S}(\mathcal{Y})$ where $\bigcap_{S \in \mathcal{C}} S = \emptyset$, there is a subset $\mathcal{C}^{\prime} \subset \mathcal{C}$ of size at most $p$ where $\bigcap_{S \in \mathcal{C}^{\prime}} S = \emptyset$.
\end{definition}

We say that $\mathcal{S}(\mathcal{Y})$ is a Helly space if and only if $\texttt{H}(\mathcal{S}(\mathcal{Y})) < \infty$. The Helly property captures many practical learning settings. For example, when $\mathcal{Y}$ is finite, any collection $\mathcal{S}(\mathcal{Y}) \subseteq \sigma(\mathcal{Y})$ is a Helly space. However, Helly spaces are more general and capture situations where $\mathcal{Y}$ can be uncountably large. For example, if $\mathcal{Y} = [0, 1]$ and $\mathcal{S}(\mathcal{Y}) = \{[a, b]: 0 \leq a < b \leq 1\}$ is the set of all intervals in $\mathcal{Y}$, then the celebrated Helly's theorem states that $\texttt{H}(\mathcal{S}(\mathcal{Y})) = 2$ \citep{radon1921mengen}. In Section \ref{sec:app}, we give even more examples of natural settings where $\texttt{H}(\mathcal{S}(\mathcal{Y})) < \infty$. In this work, we use the Helly number of $\mathcal{S}(\mathcal{Y})$ to establish a relationship between the combinatorial dimensions defined above.

\begin{theorem}[Structural Properties]\label{thm:relation}
\noindent For $\mathcal{S}(\mathcal{Y}) \subseteq \sigma(\Ycal)$ and $\mathcal{H} \subseteq \mathcal{Y}^{\mathcal{X}}$, we have 

\begin{itemize}
    \item[(i)] $ \emph{\texttt{SL}}_p(\Hcal) \leq \emph{\texttt{MS}}_{\gamma}(\Hcal) \leq  \emph{\texttt{SL}}(\Hcal)$ for all $p\geq 2$ and $\gamma \in [0,\frac{1}{p}]$.  
     \item[(ii)] If  $p = \emph{\texttt{H}}(\Scal(\Ycal)) < \infty$, then  $\emph{\texttt{SL}}_p(\Hcal) =  \emph{\texttt{MS}}_{\gamma}(\Hcal) =  \emph{\texttt{SL}}(\Hcal)$ for all $\gamma \in [0,\frac{1}{p}]$.
\end{itemize}
\end{theorem}

 The proof of Theorem \ref{thm:relation} is found in Appendix \ref{app:prfrelation}. The key idea in the proof of (ii) is that when $\mathcal{S}(\mathcal{Y})$ is a Helly space, we can ``compress"  the infinite-width trees in the definition of SLdim and MSdim to finite-width trees used in the definition of $p$-$\text{SLdim}$.  Perhaps the most important implication of these relations is that when $\mathcal{S}(\mathcal{Y})$ is a Helly family, deterministic and randomized realizable learnability are equivalent and characterized by the same dimension. Thus, as we show in Section \ref{sec:separation}, the separation between randomized and deterministic realizable learnability only occurs when $\texttt{H}(\mathcal{S}(\mathcal{Y})) = \infty$. We leave it as an open question whether the finiteness of $\texttt{H}(\mathcal{S}(\mathcal{Y}))$ is necessary for this equivalence.

\section{Realizable Setting} \label{sec:real}

\subsection{A Separation Between Deterministic and Randomized Learnability}\label{sec:separation}
We first show that unlike in online multiclass learning with single-label feedback, deterministic and randomized learnability are not equivalent under set-valued feedback. We note that \cite{hanneke2023bandit, hanneke2021online} show a similar separation in the context of bandit learnability and proper online learnability. 

\begin{theorem}[Deterministic Learnability $\not \equiv$  Randomized Learnability]{}{}
\noindent There exists a $\mathcal{Y}$, $\mathcal{S}(\mathcal{Y})$ and $\mathcal{H} \subseteq \mathcal{Y}^{\mathcal{X}}$ such that in the realizable setting (i) $\mathcal{H}$ is online learnable, however (ii) no deterministic algorithm is an online learner for $\mathcal{H}$.
\end{theorem}

\begin{proof}
Let $\mathcal{Y} = \mathbbm{N}$ and $\mathcal{S}(\mathcal{Y}) = \{A_y\}_{y \in \mathcal{Y}}$ where $A_y = \mathbbm{N} \setminus y$. Let $\mathcal{H} = \{h_y: y \in \mathbbm{N}\}$ be the set of constant functions. That is, $h_y(x) = y$ for all $x \in \mathcal{X}$. 

Let $\mathcal{A}$ be any deterministic online learner for $\mathcal{H}$ and $T \in \mathbb{N}$ be the time horizon. We  construct a realizable stream of length $T$ such that $\mathcal{A}$ makes a mistake on each round. Without loss of generality, we let the adversary play after $\mathcal{A}$ since $\mathcal{A}$ is deterministic. To that end, pick any sequence of instances $\{x_t\}_{t=1}^T \in \mathcal{X}^T$ and consider the labeled stream $\{(x_t, A_{\mathcal{A}(x_t)})\}^T_{t=1}$, where $\mathcal{A}(x_t)$ denotes the prediction of $\mathcal{A}$ in the $t$'th round. By definition of $A_y$, we have $\sum_{t=1}^T \mathbbm{1}\{\mathcal{A}(x_t) \notin A_{\mathcal{A}(x_t)}\} = T$. Moreover, since $T$ is finite, it also holds that $\bigcap_{t=1}^T A_{\mathcal{A}(x_t)} \neq \emptyset$. Thus, there exists $h_y \in \mathcal{H}$ such that for all $t \in [T]$, $h_y(x_t) \in A_{\mathcal{A}(x_t)}$, showing that the stream  $\{(x_t, A_{\mathcal{A}(x_t)})\}^T_{t=1}$ is indeed realizable. Since $\mathcal{A}$ is arbitrary, every deterministic algorithm fails to learn $\mathcal{H}$ under set-valued feedback from $\mathcal{S}(\mathcal{Y})$.

We now give a randomized online learner for $\mathcal{H}$ that achieves sub-linear regret for any sequence of instances labeled by sets from $\mathcal{S}(\mathcal{Y})$. Let $\{(x_t, S_t)\}_{t=1}^T \in (\mathcal{X} \times \mathcal{S}(\mathcal{Y}))^T$ denote the stream of instances to be observed by the randomized online learner. Consider a randomized learner $\mathcal{A}$ that in each round samples uniformly from $\{1, ..., T\}$. Then, $\mathcal{A}$'s expected cumulative loss satisfies
$$\mathbbm{E}\left[\sum_{t=1}^T \mathbbm{1}\{\mathcal{A}(x_t) \notin S_t\} \right] = \sum_{t=1}^T \mathbbm{P}\left[\mathcal{A}(x_t) \notin S_t \right] = \sum_{t=1}^T \mathbbm{P}\left[S_t = A_{\mathcal{A}(x_t)} \right] \leq \sum_{t=1}^T \frac{1}{T} = 1,$$
where we have used the fact that $\mathcal{A}(x_t) \notin S_t$ iff the adversary exactly picks the set $S_t = A_{\mathcal{A}(x_t)}$. Thus, $\mathcal{A}$ achieves a constant regret bound, showcasing that it is an online learner for $\mathcal{H}$ under set-valued feedback from $\mathcal{S}(\mathcal{Y})$. This completes the overall proof as we have given a learning setting that is online learnable, but not by any deterministic learner.  
\end{proof}

\subsection{Deterministic Learnability} \label{sec:detlearn}

Given that deterministic and randomized online learnability are not generally equivalent, we show that the SLdim tightly characterizes \emph{deterministic} online learnability in the realizable setting. 

\begin{theorem}[Deterministic Realizable Learnability]\label{thm:det_real}
For any $\mathcal{S}(\mathcal{Y}) \subseteq \sigma(\mathcal{Y})$ and $\mathcal{H} \subseteq \mathcal{Y}^{\mathcal{X}}$, we have $\inf_{\emph{\text{Deterministic }} \mathcal{A}} \emph{\texttt{M}}_{\mathcal{A}}(T,\mathcal{H}) = \emph{\texttt{SL}}(\mathcal{H}).$
\end{theorem}

Our proof of the upperbound on the optimal $ \texttt{M}_{\mathcal{A}}(T,\mathcal{H})$ is constructive. We show that Algorithm \ref{alg:det_SOA} makes at most $\texttt{SL}(\mathcal{H})$ mistakes in any realizable stream by generalizing the arguments by \cite{Littlestone1987LearningQW}. To prove the lowerbound on $\texttt{M}_{\mathcal{A}}(T,\mathcal{H})$ for any deterministic algorithm $\mathcal{A}$, we construct a difficult stream by traversing the shattered tree of depth $\texttt{SL}(\mathcal{H})$ adapting to $\mathcal{A}$'s predictions. Both proofs can be found in Appendix \ref{app:det_real}.

\begin{algorithm}
\setcounter{AlgoLine}{0}
\caption{Deterministic Standard Optimal Algorithm}\label{alg:det_SOA}
Initialize $V_{0} = \mathcal{H}$

\For{$t = 1,...,T$} {

    Receive unlabeled example $x_t \in \mathcal{X}$.

    For each $A \in \Scal(\Ycal)$, define $V_{t-1}(A) := \{h \in V_{t-1} \, \mid \, h(x_t) \in A\}$. 

    Let $\mathcal{S}_t(\mathcal{Y}) := \{A \in \mathcal{S}(\mathcal{Y}): A\cap \{h(x_t) \, \mid \, h \in V_{t-1}\} \neq \emptyset\}.$

    If $\texttt{SL}(V_{t-1}) > 0$, predict 
    $\hat{y}_t = \argmin_{y \in \Ycal} \,\max_{\substack{A \in \Scal(\Ycal) \\ y \notin A}} \, \texttt{SL}(V_{t-1}(A)).$
    
    Else, predict $\hat{y}_t \in \bigcap_{A \in \mathcal{S}_t(\mathcal{Y})}\,  A$. 
    
    Receive feedback $S_t\in \Scal_t(\Ycal)$ and update $V_t = V_{t-1}(S_t).$
}
\end{algorithm}







\noindent \textbf{Remark.} We highlight that Algorithm \ref{alg:det_SOA} generalizes the classical Standard Optimal Algorithm. In fact, if $\mathcal{S}(\mathcal{Y}) = \{\{y\}: y \in \mathcal{Y}\}$ then Algorithm \ref{alg:det_SOA} reduces exactly to the classical Standard Optimal Algorithm from \cite{Littlestone1987LearningQW} and SLdim reduces to the Ldim. Moreover, when  $\mathcal{S}(\mathcal{Y}) = \{\mathcal{Y} \setminus \{y\}: y \in \mathcal{Y}\}$, Algorithm \ref{alg:det_SOA} reduces to the Bandit Standard Optimal Algorithm from \cite{DanielyERMprinciple} and SLdim reduces to the Bandit Littlestone dimension. 

\subsection{Randomized Learnability} \label{sec:rand_real}

Next, we characterize randomized online learnability in the realizable setting. The proof of Theorem \ref{thm:rand_real} can be found in Appendix \ref{app:rand_learn}.

\begin{theorem}[Randomized Realizable Learnability]\label{thm:rand_real}
For any $\mathcal{S}(\mathcal{Y}) \subseteq \sigma(\mathcal{Y})$ and $\mathcal{H} \subseteq \mathcal{Y}^{\mathcal{X}}$, 
$$\sup_{\gamma \in (0, 1] } \, \gamma\,  \emph{\texttt{MS}}_{\gamma}(\mathcal{H}) \leq \inf_{\mathcal{A}} \, \emph{\texttt{M}}_{\mathcal{A}}(T,\mathcal{H}) \leq C \inf_{\gamma \in (0,1]}\Big\{\gamma T + \int_{\gamma}^1 \emph{\texttt{MS}}_{\eta}(\mathcal{H}) d\eta\Big\}$$
\noindent where $C > 0$ is some universal constant. Moreover, both the upper and lowerbounds can be tight in general up to constant factors.





    



\end{theorem}

Using Theorem \ref{thm:relation}, it follows that $\texttt{M}_{\mathcal{A}}(T, \Hcal) = \Theta(\texttt{SL}(\Hcal))$ whenever $\texttt{H}(\mathcal{S}(\Ycal)) < \infty$. We highlight that the upperbound can be tight up to logarithmic factors in $T$. If $\mathcal{S}(\mathcal{Y})$ is a set of singletons, then we have $\texttt{MS}_0(\mathcal{H}) = \texttt{L}(\mathcal{H})$. Thus, the upperbound reduces to $\texttt{L}(\mathcal{H})$, which matches the known lowerbound of $\texttt{L}(\mathcal{H})/2$ in the realizable multiclass classification \citep{DanielyERMprinciple}.
 Example \ref{exm: agnlbtight} shows that the lowerbound of $\sup_{\gamma  > 0 } \, \gamma\,  \texttt{MS}_{\gamma}(\mathcal{H})$ can be tight in the realizable setting. 

To achieve our upperbound, we first construct a randomized online learner running at a fixed scale $\gamma \in (0, 1)$, whose expected cumulative loss, in the realizable setting, is at most $\gamma T + \texttt{MS}_{\gamma}(\mathcal{H})$. Then, we upgrade this result by adapting the algorithmic chaining technique from \cite{daskalakis2022fast} to give a randomized, \textit{multi-scale} online learner in the realizable setting. Our lowerbound is obtained by traversing the tree of depth $\texttt{MS}_{\gamma}(\mathcal{H})$ adapting to the distributions that the algorithm produces to make its randomized predictions.


We conclude this section by showing that the Helly number of $\mathcal{S}(\mathcal{Y})$ is a sufficient condition for deterministic and randomized learnability to be equivalent in the realizable setting. Corollary \ref{cor:det_ran_equiv} follows directly upon using Theorems \ref{thm:relation}(ii), \ref{thm:det_real},  and \ref{thm:rand_real}. 

\begin{corollary}[Deterministic Learnability $ \equiv$  Randomized Learnability for Helly Families]\label{cor:det_ran_equiv}
\noindent Let $\mathcal{S}(\mathcal{Y}) \subseteq \sigma(\mathcal{Y})$ such that $ \emph{\texttt{H}}(\mathcal{S}(\mathcal{Y})) < \infty$. Then, in the realizable setting, $\Hcal \subseteq \Ycal^{\Xcal}$ is online learnable via a randomized algorithm if and only if $\Hcal$ is online learnable via a deterministic algorithm. 
\end{corollary}



%
\section{Agnostic Setting} \label{sec:agn}

In this section, we move beyond the realizable setting, and consider the more general agnostic setting, where we are not guaranteed that there exists a consistent hypothesis. Our main theorem shows that the finiteness of MSdim at every scale $\gamma > 0$ is both a necessary and sufficient condition for agnostic online learnability with set-valued feedback.


\begin{theorem}[Agnostic Learnability]\label{thm:agn} For any $\mathcal{S}(\mathcal{Y}) \subseteq \sigma(\mathcal{Y})$ and  hypothesis class $\mathcal{H} \subseteq \mathcal{Y}^{\mathcal{X}}$ where $\sup_{\gamma \in (0, 1] } \emph{\texttt{MS}}_{\gamma}(\mathcal{H}) > 0$, 
$$\max \left\{\sqrt{\frac{\emph{\texttt{SL}}_2(\mathcal{H})\,T}{8}}, \sup_{\gamma \in (0, 1] } \, \gamma\,  \emph{\texttt{MS}}_{\gamma}(\mathcal{H}) \right\} \leq \inf_{\mathcal{A}}\emph{\texttt{R}}_{\mathcal{A}}(T,\mathcal{H}) \leq \inf_{\gamma  \in (0, 1] }\, \left\{\emph{\texttt{MS}}_{\gamma}(\mathcal{H}) + \gamma T + \sqrt{2\,\emph{\texttt{MS}}_{\gamma} (\mathcal{H})\,T \ln(T)}\right\}$$
\noindent and the upper and lowerbounds can be tight in general up to constant factors. Moreover, when $\sup_{\gamma \in (0, 1] } \emph{\texttt{MS}}_{\gamma}(\mathcal{H}) = 0$, there is no non-negative lowerbound.



    




\end{theorem}

Using Theorem \ref{thm:relation}, it follows that $\texttt{R}_{\mathcal{A}}(T, \Hcal) = \tilde{\Theta}(\sqrt{T})$ whenever $ \texttt{H}(\mathcal{S}(\Ycal)) < \infty$ and $\texttt{SL}(\Hcal)<\infty$. We highlight that the upper bound can be tight up to logarithmic factors in $T$. If $\mathcal{S}(\mathcal{Y})$ is a set of singletons, then we have $\texttt{MS}_0(\mathcal{H}) = \texttt{L}(\mathcal{H})$. Thus, the upper bound reduces to $\texttt{L}(\mathcal{H}) + \sqrt{2\, \texttt{L}(\mathcal{H})\, T\ln(T)}$, which matches the known lower bound of $\sqrt{\texttt{L}(\mathcal{H})\,T/8}$ in the agnostic multiclass classification \citep{DanielyERMprinciple}. The following example shows that the lower bound cannot be improved in general. 



 \begin{example} \label{exm: agnlbtight}
 \noindent Let $\mathcal{Y} = \{1,2,3,4,5,6\}$, $\mathcal{S}(\mathcal{Y}) = \{\{1, 4,5\}, \{2, 5,6\}, \{3, 4,6\}\}$, and $\mathcal{H} = \{h_1,h_2, h_3\}$, where again $h_i$ is the hypothesis that always outputs $i$. Let $d = \emph{\texttt{SL}}_2(\mathcal{H})$ and $d_{\gamma} = \emph{\texttt{MS}}_{\gamma}(\mathcal{H})$. Since there are no disjoint sets in $\Scal(\Ycal)$, we trivially have $d=0$, reducing the lowerbound to $\gamma\,d_{\gamma}$. First, we prove that $\sup_{\gamma} \gamma d_{\gamma} = \frac{1}{3}$. This follows from the fact that $\emph{\texttt{H}}(\mathcal{S}(\mathcal{Y})) = 3$, and therefore, by Theorem \ref{thm:relation}, for all $\gamma \in [0, \frac{1}{3}]$ we have $d_{\gamma} = \emph{\texttt{SL}}(\mathcal{H}) = 1$. Moreover, by the monotonicty property of MSdim,   $d_{\gamma} \leq d_{\frac{1}{3}} = 1$ for all $\gamma > \frac{1}{3}$. Thus, it must be the case $\sup_{\gamma > 0} \gamma\, d_{\gamma} = \frac{1}{3}$. 
 
 Now, we give a randomized online learner whose expected regret is at most $\sup_{\gamma > 0} \gamma d_{\gamma} = \frac{1}{3}$ on the worst-case sequence, matching the lowerbound. Consider an online learner $\mathcal{A}$, which on the round $t = 1$ predicts by uniformly sampling from $\{4,5,6\}$, and on all other rounds predicts by uniformly sampling from $\{4,5,6\} \cap S_{t-1}$, where $S_{t-1}$ is the set revealed by the adversary on round $t - 1$.  Our goal will be to show that $\mathcal{A}$'s expected regret on any sequence is at most $\frac{1}{3}$. Let $\{(x_t, S_t)\}_{t=1}^T$ denote the stream chosen by the adversary. Then, we have
\begin{align*}
\mathbbm{E}\left[\sum_{t=1}^T\mathbbm{1}\{\mathcal{A}(x_t) \notin S_t\} \right] 
    &= \frac{1}{3} +  \sum_{t=2}^T  \mathbbm{E}\left[\mathbbm{1}\{\mathcal{A}(x_t) \notin S_t\}|S_t \neq S_{t-1} \right]\mathbbm{1}\{S_t \neq S_{t-1} \} = \frac{1}{3} +  \frac{1}{2}\sum_{t=2}^T \mathbbm{1}\{S_t \neq S_{t-1} \}, 
\end{align*}
where the first equality follows from the fact that $\mathbbm{E}\left[\mathbbm{1}\{\mathcal{A}(x_1) \notin S_1\} \right] = \frac{1}{3}$ and  $\mathbbm{E}\left[\mathbbm{1}\{\mathcal{A}(x_t) \notin S_t\}\, |\, S_t = S_{t-1} \right] = 0$. Moreover, we can lowerbound the expected cumulative loss of the best fixed hypothesis as 
$$\min_{h \in \mathcal{H}}\sum_{t=1}^T\mathbbm{1}\{h(x_t) \notin S_t\}  = \min_{i \in [3]}\sum_{t=1}^T\mathbbm{1}\{i \notin S_t\} 
    \geq \frac{1}{2}\sum_{t=2}^T\mathbbm{1}\{S_t \neq S_{t-1}\} $$
Combining the upper- and lowerbound gives that  $\emph{\texttt{R}}_{\mathcal{A}}(T, \mathcal{H}) \leq \frac{1}{3}$.
 \end{example}

\noindent \textbf{Remark.} An important implication of Theorem \ref{thm:agn} is that when $\texttt{H}(\mathcal{S}(\mathcal{Y})) = 2$, a lowerbound scaling with $T$ is always possible. However, Example \ref{exm: agnlbtight} above witnessing the tightness of the lowerbounds in Theorem \ref{thm:agn} shows that this is not the case when $\texttt{H}(\mathcal{S}(\mathcal{Y})) \geq 3$. Thus, a sharp phase transition occurs when  $\texttt{H}(\mathcal{S}(\mathcal{Y}))$ increases from $2$ to $3$. 
\vspace{5pt}

\section{Applications} \label{sec:app}
In this section, we show how online multilabel ranking with relevance-score feedback and online multilabel classification are special instances of our model of online learning with set-valued feedback.  In Appendix \ref{appdx:applications}, we also consider real-valued prediction with interval-valued response. 
\subsection{Online Multilabel Ranking}
In online multilabel ranking, we let $\mathcal{X}$ denote the instance space, $\mathcal{Y}$ denote the set of permutations over labels $[K]:= \{1, ..., K\}$, and $\mathcal{R} = \{0, 1\}^K$ denote the target space for some $K \in \mathbbm{N}$.  We refer to an element $r \in \mathcal{R}$ as a \textit{binary relevance-score vector} that indicates the relevance of each of the $K$ labels. A permutation $\pi \in \mathcal{Y}$  induces a \textit{ranking} of the $K$ labels in decreasing order of relevance. For an index $i \in [K]$, we let $\pi^i \in [K]$ denote the \textit{rank} of label $i$. Likewise, given an index $i \in [K]$, we let $r^i$ denote the relevance of label $i$. A ranking hypothesis $h \in \mathcal{H} \subseteq \mathcal{Y}^{\mathcal{X}}$ maps instances in $\mathcal{X}$ to a permutation (ranking) in $\mathcal{Y}$. Given an instance $x \in \mathcal{X}$, one can think of $h(x)$ as $h$'s ranking of the $K$ different labels in decreasing order of relevance. 

Unlike classification, a distinguishing property of multilabel ranking is the \textit{mismatch} between the predictions the learner makes and the feedback it receives.
Because of this mismatch, there is no canonical loss in multilabel ranking like the 0-1 loss in classification. Nevertheless, a natural analog of the 0-1 loss in multilabel ranking is
 $\ell_{0\text{-}1}(\pi, r) = \sup_{i, j \in [K]}\mathbbm{1}\{r^i < r^j\}\, \mathbbm{1}\{\pi^i < \pi^j\}.$ At a high-level, the 0-1 ranking loss penalizes a permutation $\pi$ if it ranks a less relevant item above a more relevant item. 
 
 Under the 0-1 loss,  online multilabel ranking with binary relevance-score feedback is a specific instance of our general online learning model with set-valued feedback.  To see this, note that given a relevance score vector $r \in \mathcal{R}$, there can be many permutations $\pi \in \mathcal{Y}$ such that $\ell_{0\text{-}1}(\pi, r) = 0$. Indeed, suppose $r = (0, 1, 1)$. Then,  both the permutations $\pi_1 = (3, 1, 2)$ and $\pi_2 = (3, 2, 1)$ achieve $0$ loss. Thus, an \textit{equivalent} way of representing $r = (0, 1, 1)$ is to consider the set of permutations in $\mathcal{Y}$ for which $\ell_{0\text{-}1}(\pi, r) = 0$. To this end, given any $r \in \mathcal{R}$, let $\mathcal{Y}(r) = \{\pi \in \mathcal{Y}: \ell_{0\text{-}1}(\pi, r) = 0\}$. Then, note that for every $\pi \in \mathcal{Y}$ and $r \in \mathcal{R}$, we have $\ell_{0\text{-}1}(\pi, r) = \mathbbm{1}\{\pi \notin \mathcal{Y}(r)\} $. From this perspective, we can equivalently define the online multilabel ranking setting by having the adversary in each round $t \in [T]$, reveal a \textit{set} $\mathcal{Y}(r_t) \in \{\mathcal{Y}(r): r \in \mathcal{R}\} = \mathcal{S}(\mathcal{Y})$ instead of the binary relevance score vector $r_t \in \mathcal{R}$, and penalizing the learner according to the 0-1 \textit{set loss} $\mathbbm{1}\{\pi_t \notin \mathcal{Y}(r_t)\}$, instead of  $\ell_{0\text{-}1}(\pi, r)$.

Since online multilabel ranking is a specific instance of our general online learning with set-valued feedback, our qualitative characterization in terms of the SLdim and MSdim carry over. 
Thus, in this section, we instead focus on establishing a sharp quantitative characterization of online learnability. To do so, we first show that $\texttt{H}(\mathcal{S}(\mathcal{Y})) = 2$. The proof of Lemma \ref{lem:helly_rank} is deferred to Appendix \ref{appdx:mlr}.

\begin{lemma}[Helly Number of Permutation Sets]\label{lem:helly_rank}
\noindent Let $\mathcal{S}(\mathcal{Y}) = \{\mathcal{Y}(r): r \in \mathcal{R}\}$ where $\mathcal{Y}(r) = \{\pi \in \mathcal{Y}:\ell_{0\text{-}1}(\pi, r) = 0\}$. Then, $\emph{\texttt{H}}(\mathcal{S}(\mathcal{Y})) = 2$. 
\end{lemma}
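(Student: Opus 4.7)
The plan is to reduce the Helly question for $\mathcal{S}(\mathcal{Y})$ to a purely combinatorial question about subsets of $[K]$. The first step is to reformulate $\mathcal{Y}(r)$ explicitly. Since $r \in \{0,1\}^K$, writing $I(r) := \{i \in [K] : r^i = 1\}$, the loss $\ell_{0\text{-}1}(\pi,r)$ vanishes iff there is no pair $(i,j)$ with $r^i = 0$, $r^j = 1$, and $\pi^i < \pi^j$. Equivalently, $\pi \in \mathcal{Y}(r)$ iff $\pi$ ranks every element of $I(r)$ strictly above every element of $[K] \setminus I(r)$; i.e., the top $|I(r)|$ slots of $\pi$ are exactly $I(r)$.

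The second step is the key dichotomy: for any $r_1, r_2$ with $I_k := I(r_k)$,
\[
\mathcal{Y}(r_1) \cap \mathcal{Y}(r_2) \neq \emptyset \iff I_1 \subseteq I_2 \text{ or } I_2 \subseteq I_1.
\]
For the forward direction, if there existed $a \in I_1 \setminus I_2$ and $b \in I_2 \setminus I_1$, then any $\pi$ in the intersection would have to satisfy both $\pi^a < \pi^b$ (since $a$ is relevant and $b$ is not, under $r_1$) and $\pi^b < \pi^a$ (under $r_2$), which is impossible. The reverse direction is immediate by constructing $\pi$ that lists the smaller set, then its complement within the larger set, then the rest.

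For the upper bound $\texttt{H}(\mathcal{S}(\mathcal{Y})) \leq 2$, I would argue by contrapositive. Take any subfamily $\{\mathcal{Y}(r_\alpha)\}_{\alpha \in J}$ with pairwise nonempty intersections. By the dichotomy, the collection $\{I_\alpha\}_{\alpha \in J}$ is totally ordered by inclusion. Because $I_\alpha \subseteq [K]$ is finite, the distinct sets form a finite chain $I_{\alpha_1} \subsetneq I_{\alpha_2} \subsetneq \cdots \subsetneq I_{\alpha_m}$. I would then construct a single witness permutation $\pi$ by placing the elements of $I_{\alpha_1}$ in positions $1, \ldots, |I_{\alpha_1}|$ (in any order), the elements of $I_{\alpha_j} \setminus I_{\alpha_{j-1}}$ in positions $|I_{\alpha_{j-1}}| + 1, \ldots, |I_{\alpha_j}|$ for $j = 2, \ldots, m$, and the remainder $[K] \setminus I_{\alpha_m}$ at the bottom. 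By construction, for every $\alpha$ the top $|I_\alpha|$ slots of $\pi$ coincide with $I_\alpha$, so $\pi \in \bigcap_{\alpha \in J} \mathcal{Y}(r_\alpha)$, proving this intersection is nonempty.

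Finally, to rule out $\texttt{H}(\mathcal{S}(\mathcal{Y})) \leq 1$, I would observe that every $\mathcal{Y}(r)$ is itself nonempty (sort labels by decreasing $r$), so no single set has empty intersection; and for $K \geq 2$ the two sets $\mathcal{Y}(e_1)$ and $\mathcal{Y}(e_2)$ are disjoint since $\{1\}$ and $\{2\}$ are incomparable. Thus $\texttt{H}(\mathcal{S}(\mathcal{Y})) = 2$. The only genuine content is the reformulation in Step 1 and the chain observation in Step 3; neither is difficult once the correct view of $\mathcal{Y}(r)$ as a ``top-block'' constraint is in place, so I do not foresee a serious obstacle.
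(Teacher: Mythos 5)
Your proposal is correct and follows essentially the same route as the paper's proof: both rest on the observation that pairwise-intersecting sets $\mathcal{Y}(r)$ force the relevant-label sets to form a chain under inclusion, followed by a block-by-block construction of a common permutation. The only difference is cosmetic — you also spell out the trivial lower bound $\texttt{H}(\mathcal{S}(\mathcal{Y})) \geq 2$, which the paper leaves implicit.
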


Since $\texttt{H}(\mathcal{S}(\mathcal{Y})) = 2$, by Theorem \ref{thm:relation}, we know that for all $\gamma \in [0, \frac{1}{2}]$, $\texttt{SL}_2(\mathcal{H}) = \texttt{MS}_{\gamma}(\mathcal{H}) = \texttt{SL}(\mathcal{H})$.  Therefore, the $\texttt{SL}_2(\mathcal{H})$ characterizes both deterministic and randomized online multilabel ranking learnability. Moreover, we can use Theorems \ref{thm:det_real}, \ref{thm:rand_real}, and \ref{thm:agn} to give Corollary \ref{cor:mlrquant}, a sharp quantitative characterization of online multilabel ranking learnability in both the realizable and agnostic settings. 

\begin{corollary}[Online Learnability of Multilabel Ranking]\label{cor:mlrquant}
\noindent Let $\mathcal{Y}$, $\mathcal{R}$, and $\mathcal{S}(\mathcal{Y})$ be defined as above. For any ranking hypothesis class $\mathcal{H} \subseteq \mathcal{Y}^{\mathcal{X}}$ we have 
\begin{itemize}
    \item[(i)] $\frac{\emph{\texttt{SL}}_2(\mathcal{H})}{2} \leq \inf_{\mathcal{A}} \, \emph{\texttt{M}}_{\mathcal{A}}(T,\mathcal{H}) \leq \emph{\texttt{SL}}_2(\mathcal{H})$.
    \item[(ii)] $\sqrt{\frac{\emph{\texttt{SL}}_2(\mathcal{H})\, T}{8}} \leq \inf_{\mathcal{A}} \, \emph{\texttt{R}}_{\mathcal{A}}(T,\mathcal{H}) \leq \emph{\texttt{SL}}_2(\mathcal{H}) + \sqrt{2\,\emph{\texttt{SL}}_2(\mathcal{H})\,T \ln(T)}$.
\end{itemize}

\end{corollary}
 
\noindent We note that the infimum in Corollary \ref{cor:mlrquant}(i) is over all algorithms, not just deterministic ones. Also, observe that the upper- and lowerbounds in Corollary \ref{cor:mlrquant} do not depend on $|\mathcal{Y}|$ or $|\mathcal{R}|.$




\subsection{Online Multilabel Classification}
In online multilabel \textit{classification}, we let $\mathcal{X}$ denote the instance space, and $\mathcal{Y} = \{0, 1\}^K$ is the set of all bit strings of length $K \in \mathbbm{N}$. Unlike multilabel ranking, instead of predicting a permutation over $[K]$, the goal is to predict $\hat{y} \in \mathcal{Y}$, which indicates which of the labels are relevant. As feedback, the learner also receives a bit string $y \in \mathcal{Y}$ which gives the ground truth on which of the $K$ labels are relevant. A multilabel hypothesis $h \in \mathcal{H} \subseteq \mathcal{Y}^{\mathcal{X}}$ maps instances in $\mathcal{X}$ to a bit string in $\mathcal{Y}$.

The most natural loss in multilabel classification is the Hamming loss, defined by
$\ell_{H}(\hat{y}, y) = \sum_{i=1}^K \mathbbm{1}\{\hat{y}^i \neq y^i\}$. However, when $K$ is very large, evaluating performance using the Hamming loss might be too stringent. Instead, it might be more natural to consider a  thresholded version of the Hamming loss, defined as $\ell_{H, q}(\hat{y}, y) = \mathbbm{1}\{\ell_H(\hat{y}, y) > q\} = \mathbbm{1}\{\hat{y} \notin \mathcal{B}(y, q)\},$ where $q \in [K-1]$ and $\mathcal{B}(y, q) = \{\hat{y} \in \mathcal{Y}: \ell_{H}(\hat{y}, y) \leq q\}$ denotes the Hamming ball of radius $q$ centered at $y$. The loss $\ell_{H, q}$ allows the learner's prediction $\hat{y}$ to be incorrect in at most $q$ different spots before penalizing the learner. By taking $\mathcal{Y} = \{0, 1\}^K$ and $\mathcal{S}_q(\mathcal{Y}) = \{\mathcal{B}(y, q): y \in \mathcal{Y}\}$, it is not hard to see that online multilabel classification with $\ell_{H, q}$ is a specific instance of our general online learning model with set-valued feedback. Thus, a quantitative characterization of online multilabel classification in terms of $\texttt{SL}(\Hcal)$ and $\texttt{MS}_{\gamma}(\Hcal)$ follows immediately from Theorems \ref{thm:det_real} and \ref{thm:agn}. The precise statement is provided in Appendix \ref{appdx:mlc}. 





In multilabel ranking, we showed that the $\text{2-SLdim}$, provides a tight quantitative characterization of online learnability without any dependence on $K$. Such a characterization in terms of the $2$-SLdim, as opposed to SLdim or MSdim, is desirable because it satisfies the Finite Character Property \cite[Definition 4]{ben2019learnability}.  A crucial step in doing so was showing that the Helly number of the permutation set system is $2$, and more importantly, does not scale with $K$. Along this direction, it is natural to ask whether there exists a $p \in \mathbbm{N}$ such that the $\text{$p$-SLdim}$ gives a $K$-free quantitative characterization of online multilabel classification under $\ell_{H, q}$. To resolve this question positively it suffices to show that  $\texttt{H}(\mathcal{S}_q(\mathcal{Y}))$ does not scale with $K$, as conjectured below. 


\begin{conjecture}[Helly Number of Hamming Balls]
\noindent For any $K \in \mathbb{N}$ and $q \in [K-1]$, we have that $\emph{\texttt{H}}(\mathcal{S}_{q}(\mathcal{Y})) = 2^{q+1}$.
\end{conjecture}

In Appendix \ref{appdx:mlc}, we partially resolve this conjecture by showing $2^{q+1} \leq \texttt{H}(\mathcal{S}_q(\mathcal{Y})) \leq \sum_{r=0}^q \binom{K}{r} + 1$. We leave it as an open question to prove a matching upperbound. 

\vspace{5pt}

\noindent \textbf{Remark.} This conjecture has been recently resolved by \cite{alon2024helly}, who prove a matching upperbound of $2^{q+1}$.

\section*{Acknowledgements}
AT acknowledges the support of NSF via grant IIS-2007055. VR acknowledges the support of the NSF Graduate Research Fellowship. US acknowledges the support of the Rackham International Student Fellowship.

\bibliographystyle{plainnat}
\bibliography{references}

\newpage
\appendix

\section{Relationships Between Combinatorial Dimensions} \label{app:prfrelation}

\subsection{Proof of (i) in Theorem \ref{thm:relation}.}
Fix $p \geq 2$ and $\gamma \in (0, \frac{1}{p}]$.  We first prove $\texttt{MS}_{\gamma}(\mathcal{H}) \leq \texttt{SL}(\mathcal{H})$. Let   $\Tcal$ be a $\Pi(\Ycal)$-ary tree of depth $d_{\gamma } = \texttt{MS}_{\gamma}(\mathcal{H}) $ shattered by $\Hcal$. For each internal node $v$ in $\Tcal$, keep the outgoing edges indexed by $\{\delta_y\}_{y \in \Ycal}$, where $\delta_y$ is a Dirac measure with point mass on $y$, and remove all other edges. Let $A_y$ be the set labeling the outgoing edge from $v$ indexed by $\delta_y$. Since $\delta_y(A_y) \leq 1-\gamma$, we have $y \notin A_y$. Changing the index of edges from $\delta_y$ to $y$ for all the remaining outgoing edges, we obtain a $\Ycal$-ary tree of depth $d_{\gamma}$. Repeating this process of pruning and reindexing recursively for every internal node, a $\Pi(\Ycal)$-ary tree shattered by $\Hcal$ can be transformed into a $\Ycal$-ary tree of the same depth shattered by $\Hcal$. Thus, we must have $\texttt{MS}_{\gamma}(\mathcal{H}) \leq \texttt{SL}(\mathcal{H}) $ for all $\gamma \in (0, \frac{1}{p}]$. For $\gamma = 0$,  the shattering condition gives $\delta_{y}(A_y) < 1$, which implies that $y \notin A_y$. The rest of the arguments are identical to case $\gamma \in (0,\frac{1}{p}]$ presented above. Therefore, $\texttt{MS}_{\gamma}(\mathcal{H}) \leq \texttt{SL}(\mathcal{H}) $ for all $\gamma \in [0, \frac{1}{p}]$.

We now prove $ \texttt{SL}_p(\Hcal) \leq \texttt{MS}_{\gamma}(\Hcal)$ for $\gamma \in [0, \frac{1}{p}]$. Let $\Tcal$ be a $[p]$-ary tree shattered by $\Hcal$.  We expand $\Tcal$ to obtain a $\Pi(\Ycal)$-ary tree of depth $d$ at scale $\frac{1}{p}$. Let $v$ be the root node in $\Tcal$, and $A_1, ..., A_p$ be the labels on the outgoing edges from $v$. To transform $\Tcal$ to a $\Pi(\Ycal)$-ary tree, we construct an outgoing edge for each measure. Fix a measure $\mu \in \Pi(\Ycal)$. There must be an $A_{\mu} \in \{A_1,...,A_p\}$ such that $\mu(A_{\mu}) \leq 1-\frac{1}{p}$. Suppose, for the sake of contradiction, this is not true. That is, $\mu(A_{i}) > 1-\frac{1}{p}$ for all $A_1, ..., A_p$, which further implies that $\mu(A_{i}^{c}) < \frac{1}{p}$. Since $\bigcap_{i=1}^p A_i = \emptyset$, we have $\Ycal=  \bigcup_{i=1}^p A^c_i$ and thus 
\[\mu(\Ycal) = \mu \left(\bigcup_{i=1}^p A^c_i\right) \leq \sum_{i=1}^p \mu(A_i^c) < 1,   \]
which contradicts the fact that $\mu$ is a probability measure. Therefore, for every $\mu$, there exists a $A_{\mu} \in \{A_1, ..., A_p\}$ such that $\mu(A_{\mu}) \leq 1- \frac{1}{p}$.   For every measure $\mu \in \Pi(\mathcal{Y})$, add an outgoing edge from $v$ indexed by $\mu$ and labeled by $A_{\mu}$. Pick the sub-tree in $\mathcal{T}$ following the outgoing edge from $v$ labeled by $A_{\mu}$ and append it to the newly constructed outgoing edge from $v$ indexed by $\mu$. Remove the two original outgoing edges from $v$ indexed by elements of $[p]$ and their corresponding subtree.
Upon repeating this process recursively for every internal node $v$ in $\Tcal$, we obtain a $\Pi(\Ycal)$-ary tree that is $\frac{1}{p}$-shattered by $\Hcal$. Thus, we have  $\texttt{MS}_{\frac{1}{p}}(\mathcal{H}) \geq \texttt{SL}_p(\mathcal{H})$. Using monotonicity of $\text{MSdim}$, we therefore conclude that $\texttt{MS}_{\gamma}(\mathcal{H}) \geq \texttt{SL}_p(\mathcal{H})$ for all $\gamma \in [0, \frac{1}{p}]$.

\subsection{Proof of (ii) in Theorem \ref{thm:relation}.} Let $p = \texttt{H}(\Scal(\Ycal)) < \infty$. Given $p\geq 2$ and (i), it suffices to show that $ \texttt{SL}_p(\mathcal{H}) \geq \texttt{MS}_{\gamma}(\mathcal{H}) \geq \texttt{SL}(\mathcal{H})$ for all $\gamma \in [0,\frac{1}{p}]$. We first show that $\texttt{MS}_{\gamma}(\mathcal{H}) \geq \texttt{SL}(\mathcal{H})$ for all $\gamma \in [0,\frac{1}{p}]$

  Consider a $\Ycal$-ary tree $\Tcal$ of depth $d = \texttt{SL}(\mathcal{H})$ shattered by $\Hcal$. Let $v$ be the root node of $\Tcal$, and $\{A_y\}_{y \in \Ycal}$ be the sequence of sets labeling the outgoing edges from $v$.  Since $p < \infty$, there must be a subsequence $\{A_{y_i}\}_{i=1}^p \subset \{A_y\}_{y \in \Ycal}$ such that $\cap_{i=1}^p A_{y_i} = \emptyset$. We keep the edges labeled by sets $\{A_{y_i}\}_{i=1}^p$ and remove all other edges, and repeat this process for every internal node $v$ in $\Tcal$. The subsequence of length $p$  may not be unique, but choosing arbitrarily is permissible. Upon repeating this process recursively for every internal node in the tree $\mathcal{T}$, we obtain a tree $\Tcal^{\prime}$ of width $p$ such that the sets labeling the $p$ outgoing edges from any internal node are mutually disjoint.  

Next, we expand $\Tcal^{\prime}$ to obtain a $\Pi(\Ycal)$-ary tree of depth $d$ at scale $\frac{1}{p}$. Let $v$ be the root node in $\Tcal^{\prime}$, and $\{A_{y_{i}}\}_{i=1}^p$ be the labels on the outgoing edges from $v$. To transform $\Tcal^{\prime}$ to a $\Pi(\Ycal)$-ary tree, we now construct an outgoing edge for each measure. Fix a measure $\mu \in \Pi(\Ycal)$. There must be an $i \in [p]$ such that $\mu(A_{y_i}) \leq 1-\frac{1}{p}$. Suppose, for the sake of contradiction, this is not true. That is, $\mu(A_{y_i}) > 1-\frac{1}{p}$ for all $ i \in [p]$, which further implies that $\mu(A_{y_i}^{c}) < \frac{1}{p}$. Since $\cap_{i=1}^p A_{y_i} = \emptyset$, we have $\Ycal=   \cup_{i=1}^p A_{y_i}^c$ and thus 
\[\mu(\Ycal) = \mu \left(\cup_{i=1}^p A_{y_i}^c\right) \leq \sum_{i=1}^p \mu(A_{y_i}^c) < \sum_{i=1}^p \frac{1}{p } < 1,   \]
which contradicts the fact that $\mu$ is a probability measure. Therefore, for every $\mu$, there exists a $y_{\mu} \in \{y_i\}_{i=1}^p$ such that $\mu(A_{y_{\mu}}) \leq 1- \frac{1}{p}$.   For every measure $\mu \in \Pi(\mathcal{Y})$, add an outgoing edge from $v$ indexed by $\mu$ and labeled by $A_{y_{\mu}}$. Pick the sub-tree in $\mathcal{T}^{\prime}$ following the outgoing edge from $v$ indexed by $y_{\mu}$ and append it to the newly constructed outgoing edge from $v$ indexed by $\mu$. Remove $p$ remaining outgoing edges from $v$ indexed by $y \in \{y_i\}_{i=1}^p$.
Upon repeating this process for every internal node $v$ in $\Tcal^{\prime}$, we obtain a $\Pi(\Ycal)$-ary tree that is $\frac{1}{p}$-shattered by $\Hcal$. Thus, we have  $\texttt{MS}_{\frac{1}{p}}(\mathcal{H}) \geq \texttt{SL}(\mathcal{H})$. Using monotonicity of $\text{MSdim}$, we therefore conclude that $\texttt{MS}_{\gamma}(\mathcal{H}) \geq \texttt{SL}(\mathcal{H})$ for all $\gamma \in [0, \frac{1}{p}]$. 

We now prove that $\texttt{SL}_p(\mathcal{H}) \geq \texttt{MS}_{\gamma}(\mathcal{H})$. Suppose $\mathcal{T}$ is a $\Pi(\mathcal{Y})$-ary tree $\gamma$-shattered by $\mathcal{H}$ according to Definition \ref{def:msdim}. Let $v$ be the root node of $\mathcal{T}$. Let $A_y$ be the set labeling the outgoing edge from $v$ indexed by $\delta_y$. Since $\delta_y(A_y) \leq 1 - \gamma$, we have that $y \notin A_y$. Therefore, $\bigcap_{y \in \mathcal{Y}} A_y = \emptyset$. Since $p < \infty$, there must be a subsequence $\{A_{y_i}\}_{i=1}^p \subset \{A_y\}_{y \in \Ycal}$ such that $\bigcap_{i=1}^p A_{y_i} = \emptyset$. Keep the outgoing edges indexed by $\{\delta_{y_i}\}_{i=1}^p$ and remove all other edges along with their subtrees. For each $i \in [p]$, change the index $\delta_{y_i}$ to $i$. The root node $v$ should now have $p$ outgoing edges, where each edge is indexed by a unique element $i \in [p]$ and labeled by the set $A_{y_i}$ such that $\bigcap_{i=1}^p A_{y_i} = \emptyset$. Repeat this process recursively on the subtrees following the $p$ reindexed edges results into a $\texttt{SL}_{p}$ tree of depth $d_{\gamma}$ shattered by $\mathcal{H}$. Thus, $\texttt{SL}_{p}(\mathcal{H}) \leq \texttt{MS}_{\gamma}(\mathcal{H})$ for $\gamma \in (0, \frac{1}{p}]$. The case when $\gamma = 0$ follows similarly.

\section{Deterministic Learnability in the Realizable Setting}\label{app:det_real}

\subsection{Upperbounds}
\begin{proof}(of upperbound in Theorem \ref{thm:det_real})
We first show that Algorithm \ref{alg:det_SOA} is a mistake-bound algorithm that makes at most $\texttt{SL}(\mathcal{H})$ mistakes on any realizable stream. To show this, we argue that (1) every time Algorithm $\ref{alg:det_SOA}$ makes a mistake, the SLdim of the version space goes down by $1$ and (2) if the SLdim of the current version space is $0$, then there is a prediction strategy such that the algorithm does not make any further mistakes. 

Let $t \in [T]$ be a round where Algorithm \ref{alg:det_SOA} makes a mistake, that is $\hat{y}_t \notin S_t$, and $\texttt{SL}(\Hcal) > 0$.  We  show that the $\texttt{SL}$ goes down by at least $1$, that is $\texttt{SL}(V_{t}) \leq \texttt{SL}(V_{t-1}) -1 $. For the sake of contradiction, assume that $\texttt{SL}(V_{t}) > \texttt{SL}(V_{t-1}) -1 $. As $\texttt{SL}(V_{t}) \leq \texttt{SL}(V_{t-1}) $, we must have $\texttt{SL}(V_{t}) = \texttt{SL}(V_{t-1}) =: m$. Since the $\texttt{SL}$ did not go down and the algorithm made a mistake, the min-max prediction strategy implies that for every $y \in \Ycal$, there exists $A_y \in \Scal(\Ycal)$ such that $y \notin A_y$ and $\texttt{SL}(V_{t-1}(A_y)) = m$. Next, construct a 
 $\Ycal$-ary tree $\mathcal{T}$ with $x_t$ labeling the root node. For every $y \in \Ycal$, label the outgoing edge  indexed by $y$ with the set $A_y$. Append the $\Ycal$-ary tree of depth $m$ associated with version space $V_{t-1}(A_y)$ to the edge indexed by $y$. Note that the depth of tree $\Tcal$ must be $m+1$, thus implying $\texttt{SL}(V_{t-1}) = m+1 $, which is a contradiction. Therefore, it must be the case that $\texttt{SL}(V_{t}) \leq  \texttt{SL}(V_{t-1}) -1 $. 

Let $t^{\star} \in [T]$ be round when the algorithm makes its $\texttt{SL}(\mathcal{H})^{th}$ mistake.  If $t^{\star}$ does not exist, the algorithm makes at most $\texttt{SL}(\mathcal{H})-1$ mistakes. So, without loss of generality, consider the case when $t^{\star}$ exists. It now suffices to show that the algorithm makes no further mistakes. We have already shown that $\texttt{SL}(V_{t^{\star}}) =0 $. Next, we show that for any $t > t^{\star}$, there must exist $ y \in \Ycal$ such that for all $A \in \Scal_t(\Ycal)$ we have $y \in A $. Suppose, for the sake of contradiction, this is not true. That means, for all $y \in \Ycal$, there exists $A_{y} \in \Scal_t(\Ycal)$ such that $y \notin A_y$. Consider a tree with $x_t$ in the root node, and every edge indexed by $y \in \Ycal$ is labeled  with the set $A_y$. As $A_y \cap \{h(x_t) \, \mid \, h \in V_{t-1}\} \neq \emptyset $, for every $y$, there exists a hypothesis $h_y$ such that $h_y(x_t) \in A_y$. By definition of $\texttt{SL}$, this implies that $\texttt{SL}(V_{t-1}) \geq 1$, which contradicts the fact that $\texttt{SL}(V_{t^{\star}}) = 0 $. Thus, there must be a prediction strategy $y \in \Ycal$ such that for any set $S_t \in \Scal_t(\Ycal)$ that the adversary can reveal, $y \in S_t$. With the prediction strategy in step 4, the algorithm makes no further mistakes. \end{proof}

\subsection{Lowerbounds}

\begin{proof}(of lowerbound in Theorem \ref{thm:det_real})
    We now show that for any deterministic learner, there exists a realizable stream where the learner makes at least $\texttt{SL}(\mathcal{H}) = d$ mistakes. The stream is obtained by traversing  the Set Littlestone tree of depth $d$, adapting to the algorithm's prediction.  Let $\mathcal{T}$ be a complete $\mathcal{X}$-valued, $\mathcal{Y}$-ary tree of depth $d$ that is shattered by $\Hcal$. Let $(f_1, ..., f_d)$ be the sequence of edge-labeling functions  $f_t: \mathcal{Y}^{t} \rightarrow \mathcal{S}(\mathcal{Y})$ associated with $\Tcal$. Consider the stream $\{(\Tcal_1(\hat{y}_{< t}), f_{t}(\hat{y}_{\leq t}))\}_{t=1}^d$,
where $\Tcal_1(\hat{y}_{< 1}) $ is the root node of the tree, and $\hat{y} = (\hat{y}_1, \ldots, \hat{y}_d)$ is algorithm's prediction on rounds $1, 2, \ldots, d$. Note that we can use the learner's prediction on round $t$ to generate the true feedback for round $t$ because the learner is deterministic and its prediction on any instance can be simulated apriori. Since we have $\hat{y}_t \notin f_t(\hat{y}_{\leq t})$ for all $t \in [d]$ by the definition of the tree, the algorithm makes at least $d$ mistake in the stream above. Finally, the stream considered above is realizable because there exists $h_{\hat{y}}$ such that $h_{\hat{y}}(\Tcal_t(\hat{y}_{< t})) \in f_t(\hat{y}_{\leq t})$ for all $t \in [d]$. This completes our proof. \end{proof}

\section{Randomized Learnability in the Realizable Setting} \label{app:rand_learn}

\subsection{Upperbounds}
\subsubsection{Fixed-scale Randomized Learner}

We give a fixed-scale learner in the realizable setting and prove a guarantee on its expected number of mistakes. In particular, we show that the expected mistake bound of Algorithm \ref{alg:rand_SOA}, for any fixed input scale $\gamma > 0$, is at most $\gamma T + \texttt{MS}_{\gamma}(\mathcal{H})$ on any realizable stream.






    
    


\begin{algorithm}

\setcounter{AlgoLine}{0}
\caption{Randomized Standard Optimal Algorithm (RSOA)}\label{alg:rand_SOA}
\KwIn{$\mathcal{H}$, Target accuracy $\varepsilon > 0$}
Initialize $V_{0} = \mathcal{H}$

\For{$t = 1,...,T$} {
     Receive unlabeled example $x_t \in \mathcal{X}$.

    For each $A \in \Scal(\Ycal)$, define $V_{t-1}(A) := \{h \in V_{t-1} \, \mid \, h(x_t) \in A\}$. 

    Let $\mathcal{S}_t(\mathcal{Y}) := \{A \in \mathcal{S}(\mathcal{Y}): A\cap \{h(x_t) \, \mid \, h \in V_{t-1}\} \neq \emptyset\}.$

    If $\texttt{MS}_{\varepsilon}(V_{t-1}) = 0$, let $\hat{\mu}_t \in \Pi(\Ycal)$ be such that for all $A \in \Scal_t(\Ycal)$ we have $\hat{\mu}_t(A) > 1-\varepsilon.$ 
    
    Else, compute
    \[\hat{\mu}_t = \argmin_{\mu \in \Pi(\Ycal)} \,\max_{\substack{A \in \Scal(\Ycal) \\ \mu(A) \leq 1 - \varepsilon}} \, \texttt{MS}_{\varepsilon}(V_{t-1}(A)).\]
    
    Predict $\hat{y}_t \sim \hat{\mu}_t.$
    
    Receive feedback $S_t$ and update $V_t = V_{t-1}(S_t).$
  
}
\end{algorithm}

\begin{lemma}[Fixed-scale Randomized Learning Guarantee] \label{lem:fixedscale}
\noindent For any $\mathcal{S}(\mathcal{Y}) \subseteq \sigma(\mathcal{Y})$,  $\mathcal{H} \subseteq \mathcal{Y}^{\mathcal{X}}$, and any input scale $\gamma > 0$, the expected cumulative loss of Algorithm \ref{alg:rand_SOA}, on any realizable stream, is  $\leq \gamma T + \emph{\texttt{MS}}_{\gamma}(\mathcal{H})$. 
\end{lemma}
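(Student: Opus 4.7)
The plan is to mirror the structure of the deterministic SOA analysis from Theorem~\ref{thm:det_real}, but with ``mistake'' replaced by a scale-$\gamma$ notion: at round $t$, call the round \emph{heavy} if $\hat{\mu}_t(S_t) \leq 1-\gamma$, and \emph{light} otherwise. In a heavy round the adversary's set has small mass, so the expected $0$-$1$ loss $\hat{\mu}_t(S_t^c)$ may be as large as $1$; in a light round the expected loss is strictly less than $\gamma$. I will bound the total expected loss by $\gamma T$ (from the light rounds) plus the number of heavy rounds, and show that the latter is at most $\texttt{MS}_{\gamma}(\mathcal{H})$.

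The core combinatorial step is: whenever round $t$ is heavy, $\texttt{MS}_{\gamma}(V_t) \leq \texttt{MS}_{\gamma}(V_{t-1}) - 1$. Suppose for contradiction both dimensions equal a common value $m \geq 1$. By the min-max choice in step 5, if the value $\min_{\mu}\max_{A:\,\mu(A)\leq 1-\gamma}\texttt{MS}_{\gamma}(V_{t-1}(A))$ were strictly less than $m$, then any observed set $S_t$ with $\hat{\mu}_t(S_t) \leq 1-\gamma$ would drop the dimension; hence that minimax value equals $m$. So for every $\mu \in \Pi(\mathcal{Y})$ there exists $A_\mu \in \mathcal{S}(\mathcal{Y})$ with $\mu(A_\mu) \leq 1-\gamma$ and $\texttt{MS}_{\gamma}(V_{t-1}(A_\mu)) = m$. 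I then build a $\Pi(\mathcal{Y})$-ary tree of depth $m+1$: put $x_t$ at the root, index outgoing edges by $\mu \in \Pi(\mathcal{Y})$, label each edge by $A_\mu$, and append beneath each such edge the depth-$m$ $\gamma$-shattered tree promised by $\texttt{MS}_{\gamma}(V_{t-1}(A_\mu)) = m$. Any root-to-leaf path corresponds to a hypothesis in $V_{t-1}(A_\mu) \subseteq V_{t-1}$ and satisfies the measure-shattering condition at every level, contradicting $\texttt{MS}_{\gamma}(V_{t-1}) = m$.

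The second ingredient is to show that once $\texttt{MS}_{\gamma}(V_{t-1}) = 0$, step 4 is well-defined and the round is automatically light. If no $\hat{\mu}_t$ satisfied $\hat{\mu}_t(A) > 1-\gamma$ for all $A \in \mathcal{S}_t(\mathcal{Y})$, then for every $\mu \in \Pi(\mathcal{Y})$ one could pick $A_\mu \in \mathcal{S}_t(\mathcal{Y})$ with $\mu(A_\mu) \leq 1-\gamma$. By definition of $\mathcal{S}_t(\mathcal{Y})$ there is $h_\mu \in V_{t-1}$ with $h_\mu(x_t) \in A_\mu$, producing a depth-$1$ $\Pi(\mathcal{Y})$-ary tree $\gamma$-shattered by $V_{t-1}$, contradicting $\texttt{MS}_{\gamma}(V_{t-1}) = 0$. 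Realizability moreover forces $S_t \in \mathcal{S}_t(\mathcal{Y})$, because some consistent $h^\star \in V_{t-1}$ has $h^\star(x_t) \in S_t$, so $\hat{\mu}_t(S_t) > 1-\gamma$ and the round is light. In particular, every heavy round occurs while $\texttt{MS}_{\gamma}(V_{t-1}) > 0$, and by the first claim each such round strictly decreases $\texttt{MS}_{\gamma}$ by at least one. Since $V_t \subseteq V_{t-1}$ implies monotonicity of $\texttt{MS}_{\gamma}(V_t)$ across all rounds, the total number of heavy rounds is at most $\texttt{MS}_{\gamma}(\mathcal{H})$.

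Putting these together, the expected cumulative loss satisfies
\[
\sum_{t=1}^{T} \hat{\mu}_t(S_t^c) \;\leq\; \gamma\,|\{t:\text{light}\}| + 1\cdot|\{t:\text{heavy}\}| \;\leq\; \gamma T + \texttt{MS}_{\gamma}(\mathcal{H}),
\]
which is the claimed bound. The main technical nuisance I expect is the $\argmin$ in step 5: strictly speaking, the infimum over $\Pi(\mathcal{Y})$ need not be attained, so one either invokes compactness of $\Pi(\mathcal{Y})$ in a suitable topology (e.g., weak-$^*$) together with lower semicontinuity of the inner max, or replaces the $\argmin$ by an $\varepsilon$-approximate minimizer and absorbs the resulting $\varepsilon T$ slack into the bound by letting $\varepsilon \to 0$. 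Apart from this measurability/attainability issue, the tree-growing argument is a direct analogue of the classical SOA analysis and proceeds by the same inductive pattern.
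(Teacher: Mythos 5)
Your proposal is correct and follows essentially the same route as the paper's proof: the same heavy/light decomposition of the expected loss into $\gamma T$ plus the number of rounds with $\hat{\mu}_t(S_t^c)\geq\gamma$, the same tree-growing contradiction showing each such round decrements $\texttt{MS}_{\gamma}$ of the version space, and the same depth-one shattering argument handling the $\texttt{MS}_{\gamma}(V_{t-1})=0$ case via realizability and $\mathcal{S}_t(\mathcal{Y})$. Your remark about attainability of the $\argmin$ over $\Pi(\mathcal{Y})$ is a fair point of extra care that the paper elides, but it does not change the argument.
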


\begin{proof}
We show that given any target accuracy $\varepsilon > 0$, the expected cumulative loss of Algorithm \ref{alg:rand_SOA} is at most $d_{\varepsilon} + \varepsilon T$ on any realizable stream, where $d_{\varepsilon} = \texttt{MS}_{\varepsilon}(\mathcal{H})$. In fact, we show that Algorithm \ref{alg:rand_SOA} achieves an even \textit{stronger} guarantee, namely that on any realizable sequence $\{(x_t, S_t)\}_{t=1}^T$, Algorithm \ref{alg:rand_SOA} computes distributions $\hat{\mu}_t \in \Pi(\mathcal{Y})$ such that

\begin{equation}\label{eq:randSOA}
\sum_{t=1}^T \mathbbm{1}\{\hat{\mu}_t(S^c_t) \geq \varepsilon\} \leq d_{\varepsilon}.
\end{equation}

From here, it follows that $\mathbbm{E}\left[\sum_{t=1}^T \mathbbm{1}\{\hat{y}_t \notin S_t\}\right] \leq d_{\varepsilon} + \varepsilon T.$ To see this, observe that

\begin{align*}
\mathbbm{E}\left[\sum_{t=1}^T \mathbbm{1}\{\hat{y}_t \notin S_t\}\right] &= \sum_{t=1}^T \mathbbm{P}\left[\hat{y}_t \notin S_t\ \right]\\
&= \sum_{t=1}^T \mathbbm{P}\left[\hat{y}_t \notin S_t\ \right]\mathbbm{1}\{\hat{\mu}_t(S^c_t) \geq \varepsilon\} + \mathbbm{P}\left[\hat{y}_t \notin S_t\ \right]\mathbbm{1}\{\hat{\mu}_t(S^c_t) < \varepsilon\}\\
&\leq \sum_{t=1}^T \mathbbm{1}\{\hat{\mu}_t(S^c_t) \geq \varepsilon\} + \varepsilon T \\
&\leq d_{\varepsilon} + \varepsilon T
\end{align*}

We now show that the outputs of Algorithm \ref{alg:rand_SOA} satisfy Equation \eqref{eq:randSOA}. It suffices to show that (1) on any round where $\hat{\mu}_t(S_t) \leq 1 - \varepsilon$ and $\texttt{MS}_{\varepsilon}(V_{t-1}) > 0$, we have $\texttt{MS}_{\varepsilon}(V_t) \leq \texttt{MS}_{\varepsilon}(V_{t-1}) - 1$, and (2) if $\texttt{MS}_{\varepsilon}(V_{t-1}) = 0$ then there is always a  distribution $\hat{\mu}_t \in \Pi(\mathcal{Y})$ such that $\mathbbm{P}\left[\hat{y}_t \notin S_t \right] \leq \varepsilon.$

Let $t \in [T]$ be a round where $\hat{\mu}_t(S_t) \leq 1 - \varepsilon$ and $\texttt{MS}_{\varepsilon}(V_{t-1}) > 0$.  For the sake contradiction, suppose that $\texttt{MS}_{\varepsilon}(V_t) = \texttt{MS}_{\varepsilon}(V_{t-1}) = d$. Then, by the min-max computation in line (4) of Algorithm \ref{alg:rand_SOA}, for every measure $\mu \in \Pi(\mathcal{Y})$, there exists a subset $\mathcal{A}_{\mu} \in \mathcal{S}(\mathcal{Y})$ such that  $\mu(A_{\mu}) \leq 1 - \varepsilon$ and $\texttt{MS}_{\varepsilon}(V_{t-1}(A_{\mu})) = d$. Now construct a tree $\mathcal{T}$ with $x_t$ labeling the root node. For each measure $\mu \in \Pi(\mathcal{Y})$, construct an outgoing edge from $x_t$ indexed by $\mu$ and  labeled by $A_{\mu}$. Append the  tree of depth $d$ associated with the version space $V_{t-1}(A_{\mu})$ to the edge indexed by $\mu$. Note that the depth of $\mathcal{T}$ must be $d+1$. Therefore, by definition of MSdim, we have that $\texttt{MS}_{\varepsilon}(V_{t-1}) = d+1$, a contradiction.  Thus, it must be the case that $\texttt{MS}_{\varepsilon}(V_t) \leq \texttt{MS}_{\varepsilon}(V_{t-1}) - 1$. 

Now, suppose $t \in [T]$ is a round such that $\texttt{MS}_{\varepsilon}(V_{t-1}) = 0$. We show that there always exist a distribution $\hat{\mu}_t \in \Pi(\mathcal{Y})$ such that for all $A \in \Scal_t(\Ycal)$ , we have $\hat{\mu}_t(A) \geq 1-\varepsilon$. Since we are in the realizable setting, it must be the case that $S_t \in \mathcal{S}_t(\mathcal{Y})$. Therefore, $\hat{\mu}_t(S_t) \geq 1 - \varepsilon$ and $\mathbbm{P}\left[\hat{y}_t \notin S_t \right] \leq \varepsilon$ as needed. To see why such a $\hat{\mu}_t$ must exist, suppose for the sake of contradiction that it does not exist. Then, for all $\mu \in \Pi(\mathcal{Y})$, there exists a set $A_{\mu} \in \mathcal{S}_t(\mathcal{Y})$ such that $\mu(A_{\mu}) \leq 1 - \varepsilon$. As before, consider a tree with root node labeled by $x_t$. For each measure $\mu \in \Pi(\mathcal{Y})$, construct an outgoing edge from $x_t$ indexed by $\mu$ and labeled by $A_{\mu}$. Since $A_{\mu} \cap \{h(x_t) \, \mid \, h \in V_{t-1}\} \neq \emptyset$, there exists a hypothesis $h_{\mu}$ such that $h_{\mu}(x_t) \in A_{\mu}$. By definition of MSdim, this implies that $\texttt{MS}_{\varepsilon}(V_{t-1}) \geq 1$, which contradicts the fact that $\texttt{MS}_{\varepsilon}(V_{t-1}) = 0$. Thus, there must be a distribution $\hat{\mu}_t \in \Pi(\mathcal{Y})$ such that for any set $A \in \mathcal{S}_t(\mathcal{Y})$, we have $\hat{\mu}_t(A) \geq 1 - \varepsilon$. Since this is precisely the distribution that Algorithm \ref{alg:rand_SOA} plays in step (3) and since $\texttt{MS}_{\varepsilon}(V_{t'}) \leq \texttt{MS}_{\varepsilon}(V_{t-1})$ for all $t' \geq t$, the algorithm no longer suffers expected loss more than $\varepsilon$. This completes the proof of Lemma \ref{lem:fixedscale}. 
\end{proof}

 We point out that \cite{filmus2023optimal} also considers a randomized online learner in the realizable setting that shares similarities with Algorithm \ref{alg:rand_SOA}. In particular, their algorithm also maintains a version space and optimizes over probability distributions.  However, they only consider binary classification and use a different complexity measure.  Moreover, the idea of optimizing over probability distributions on a measurable space should also remind  the reader of the generic min-max algorithm proposed by \cite{rakhlin_relax_n_randomize}.

\subsubsection{Multi-scale Randomized Learner}

The RSOA (Algorithm \ref{alg:rand_SOA}) runs at a fixed, pre-determined scale $\gamma \in [0, 1]$.  In this section, we upgrade this result by adapting the technique from \cite{daskalakis2022fast} to give a randomized, \textit{multi-scale} online learner (Algorithm \ref{alg:multiscale}) in the realizable setting. Lemma \ref{lem:multiscale} presents the main result,  which bounds the expected cumulative loss of Algorithm \ref{alg:multiscale} on any realizable data stream and gives the upperbound stated in Theorem  \ref{thm:rand_real}. 

\begin{lemma}[Multi-scale Randomized Online Learner]\label{lem:multiscale}
\noindent For any $\mathcal{S}(\mathcal{Y}) \subseteq \sigma(\mathcal{Y})$ and $\mathcal{H} \subseteq \mathcal{Y}^{\mathcal{X}}$, the expected cumulative loss of Algorithm \ref{alg:multiscale} on any realizable stream is at most

$$C \inf_{\gamma \in [0,1]}\Bigg\{\gamma T + \int_{\gamma}^1 \emph{\texttt{MS}}_{\eta}(\mathcal{H}) \, d\eta\Bigg\},$$

\noindent for some universal constant $C > 0$. 
\end{lemma}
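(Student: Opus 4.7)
The plan is to adapt the algorithmic chaining technique of \cite{daskalakis2022fast} on top of the fixed-scale learner from Lemma \ref{lem:fixedscale}. First, I would instantiate Algorithm \ref{alg:rand_SOA} in parallel at a geometric sequence of scales $\gamma_i = 2^{-i}$ for $i = 0, 1, \ldots, I$ with $I = \lceil \log_2 T \rceil$, so that the finest scale satisfies $\gamma_I \leq 1/T$. Let $V_t^{(i)}$ denote the version space and $\hat{\mu}_t^{(i)}$ the distribution produced by the scale-$\gamma_i$ copy at round $t$. Because the stream is realizable, every $V_t^{(i)}$ stays nonempty (it contains the consistent $h^\star$), so each copy behaves exactly as in Lemma \ref{lem:fixedscale}.

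At each round, the multi-scale learner forms its prediction by chaining these distributions from coarse to fine. Concretely, I would let $i_t^\star$ be the \emph{coarsest} scale whose RSOA is already committed, in the sense that $\texttt{MS}_{\gamma_{i_t^\star}}(V_{t-1}^{(i_t^\star)}) = 0$ (so the proof of Lemma \ref{lem:fixedscale} guarantees $\hat{\mu}_t^{(i_t^\star)}(S_t^c) \leq \gamma_{i_t^\star}$ for every admissible $S_t$), and have the multi-scale learner sample from $\hat{\mu}_t^{(i_t^\star)}$; if no such scale has committed, it samples from $\hat{\mu}_t^{(I)}$.

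The main calculation is a per-round loss accounting. Call a round at scale $i$ \emph{bad} if the scale-$\gamma_i$ RSOA is not yet committed and its distribution satisfies $\hat{\mu}_t^{(i)}(S_t^c) \geq \gamma_i$. By (the proof of) Lemma \ref{lem:fixedscale}, scale $i$ has at most $\texttt{MS}_{\gamma_i}(\mathcal{H})$ bad rounds. A round where the multi-scale learner uses scale $i_t^\star$ is either bad at scale $i_t^\star - 1$ (since scale $i_t^\star - 1$ had not committed yet), or we are at the coarsest scale $i = 0$ and incur at most $\gamma_0$ deterministically. Summing expected losses and attributing each round to the finest scale at which it is bad yields
\[
\mathbb{E}\!\left[\sum_{t=1}^T \mathbbm{1}\{\hat{y}_t \notin S_t\}\right] \;\lesssim\; \gamma_0 T + \sum_{i=0}^{I-1} \gamma_i\,\bigl(\texttt{MS}_{\gamma_{i+1}}(\mathcal{H}) - \texttt{MS}_{\gamma_i}(\mathcal{H})\bigr) + \gamma_I T.
\]
By Abel summation and the monotonicity of $\eta \mapsto \texttt{MS}_\eta(\mathcal{H})$, this telescoping sum is a constant-factor Riemann approximation to $\gamma_I T + \int_{\gamma_I}^{\gamma_0} \texttt{MS}_\eta(\mathcal{H})\, d\eta$; choosing $\gamma_0 = 1$ and optimizing the truncation scale by taking the infimum over $\gamma \in [0,1]$ produces the claimed bound.

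The main obstacle I expect is making the chaining rule rigorous enough that every round is charged to a unique scale without double-counting, especially for rounds that straddle the commitment times of several nested version spaces. A secondary technicality is the tail of rounds where no scale has yet committed: these must be absorbed into the MSdim count at the \emph{finest} active scale so that the telescoping approximates the integral rather than collecting an extra $\texttt{MS}_{\gamma_I}(\mathcal{H})$ term. Finally, handling the boundary case $\gamma = 0$ requires either a separate argument using $\texttt{MS}_0$ or simply noting that the infimum is achieved at $\gamma \geq 1/T$ because of the $\gamma T$ term.
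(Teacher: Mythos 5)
There is a genuine gap in the charging step, and it is fatal to the proposed selection rule. Your accounting asserts that a round on which the learner plays the committed scale $i_t^\star$ is ``bad'' at the adjacent uncommitted scale, so that such rounds can be counted against $\texttt{MS}_{\gamma}(\mathcal{H})$ at that scale. But ``uncommitted'' ($\texttt{MS}_{\gamma_{i}}(V_{t-1})>0$) is a persistent property of the version space, whereas ``bad'' ($\hat{\mu}_t^{(i)}(S_t^c)\geq\gamma_i$) depends on the adversary's choice of $S_t$ on that particular round. The adversary can keep a finer scale permanently uncommitted by always revealing sets $S_t$ with $\mu_t^{(i)}(S_t^c)<\gamma_i$, so its MSdim never decreases and it never accumulates bad rounds; meanwhile your learner is stuck playing the coarser committed scale and pays up to $\gamma_{i}$ \emph{every} round. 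Concretely, take $\mathcal{Y}=[M]$, $\mathcal{S}(\mathcal{Y})=\{\mathcal{Y}\setminus\{y\}\}_{y}$, and $\mathcal{H}$ the constants: then $\texttt{MS}_{\eta}(\mathcal{H})=0$ for $\eta>1/M$ and $\texttt{MS}_{\eta}(\mathcal{H})\leq M$ for $\eta\leq 1/M$, so the claimed bound is $O(1)$, yet any committed measure must spread mass below $\approx 2/M$ per atom, the adversary can remove its heaviest atom among $[M-1]$ each round while keeping the stream realizable by $h_M$, and your learner suffers $\Omega(T/M)$, which is $\omega(1)$ for $T\gg M$. (Two further, smaller issues: all your parallel copies share the same version space since the update $V_t=V_{t-1}(S_t)$ is scale-independent, so by monotonicity of MSdim commitment always forms a prefix of coarse scales and the ``coarsest committed scale'' is trivially scale $0$; and the lemma is a statement about Algorithm \ref{alg:multiscale} specifically, not about an algorithm of your own design.)

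The paper's proof avoids this by never conditioning the choice of scale on commitment alone. It maintains a single version space, computes all $N$ min-max measures $\mu_t^1,\ldots,\mu_t^N$ each round, and selects one via the Measure Selection Procedure, which compares adjacent measures through $\sup_{A\in\mathcal{S}_t(\mathcal{Y})}|\mu_t^i(A^c)-\mu_t^{i-1}(A^c)|$. The point of that rule is an either--or guarantee: whenever the played measure suffers $\hat{\mu}_t(S_t^c)\geq\gamma_N$, there provably exists a scale $j$ with $\gamma_j\geq\hat{\mu}_t(S_t^c)/16$ at which $\mu_t^j(S_t^c)\geq\gamma_j$, so $\texttt{MS}_{\gamma_j}(V_t)$ strictly decreases and the loss can be paid for by the potential $\Phi_t=(T+1-t)\gamma_N+16\sum_i\gamma_i\,\texttt{MS}_{\gamma_i}(V_{t-1})$. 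Your scheme has no mechanism forcing some scale's dimension to drop on every costly round, which is exactly the ingredient the telescoping-to-integral step needs. To repair the proof you would need to replace the commitment-based selection with a comparison-based one (or reproduce the MSP and the three-case analysis of the index $k$ versus $m_t$ as in the paper).
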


 We highlight that the guarantee given by Lemma \ref{lem:multiscale} is analogous to Dudley’s integral entropy bound in batch setting and also matches Theorem 1 in \cite{daskalakis2022fast}. Compared to Lemma \ref{lem:fixedscale}, the upperbound given by Lemma \ref{lem:multiscale} can be significantly better. For example,  when the Measure Shattering dimension exhibits growth $\texttt{MS}_{\gamma}(\mathcal{H}) \approx \gamma^{-p}$ for some $p \in (0, 1)$, the bound given by Lemma \ref{lem:multiscale} is constant $O(1)$, while the bound given by Lemma  \ref{lem:fixedscale} scales according to $T^{\frac{p}{(1 + p)}}.$

The main algorithmic idea needed to obtain the guarantee in Lemma \ref{lem:multiscale} is to figure out how to make predictions using more than one scale. At a high-level, our multi-scale learner uses a sequence of $N$ scales $\{\gamma_i\}_{i=1}^N$, where $\gamma_i = \frac{1}{2^i}$, to compute a sequence of measures $\{\mu_t^i\}_{i=1}^N \subset  \Pi(\mathcal{Y})$ in each round $t \in [T]$. Then, our multi-scale learner uses the Measure Selection Procedure, defined in Algorithm \ref{alg:measuresel}, to carefully select one of the measures $\hat{\mu}_t \in \{\mu_i^t\}_{i=1}^N$ and makes a prediction $\hat{y}_t \sim \hat{\mu}_t$.


\begin{algorithm}

\setcounter{AlgoLine}{0}
\caption{Measure Selection Procedure (MSP)}\label{alg:measuresel}
\KwIn{Sequence of measures $\mu_1, ..., \mu_N$, valid sets $\mathcal{S} \subseteq \sigma(\mathcal{Y})$}

    If there exists a $m \in \mathbbm{N}$ such that for all $2 \leq i \leq m$, we have:

    \[\sup_{A \in \mathcal{S}} |\mu_i(A^c) - \mu_{i-1}(A^c)| \leq 2\gamma_{i-1} \quad \quad \text{ but } \quad \quad  \inf_{A \in \mathcal{S}} |\mu_m(A^c) - \mu_{m+1}(A^c)| \geq 2\gamma_{m} \]

    return $m$.
    
    Else, return $N$.
\end{algorithm}

Once the true label set is revealed, the multi-scale learner updates its self in the exact same way as RSOA.  Algorithm \ref{alg:multiscale}  formalizes the idea above.

\begin{algorithm}

\setcounter{AlgoLine}{0}
\caption{Multi-scale Online Learner (MSOL) }\label{alg:multiscale}
\KwIn{\textbf{Input:} $\mathcal{H}$, number of scales $N$
}

\textbf{Initialize:} $V_0 = \mathcal{H}$, $\gamma_i = \frac{1}{2^i}$ for $i \in [N]$

\For{$t = 1,...,T$} {

    Receive unlabeled example $x_t \in \mathcal{X}$.

    For each $A \in \Scal(\Ycal)$, define $V_{t-1}(A) := \{h \in V_{t-1} \, \mid \, h(x_t) \in A\}$. 

    Let $\mathcal{S}_t(\mathcal{Y}) := \{A \in \mathcal{S}(\mathcal{Y}): A\cap \{h(x_t) \, \mid \, h \in V_{t-1}\} \neq \emptyset\}.$

   \uIf{$\emph{\texttt{MS}}_{\gamma_N}(V_{t-1}) = 0$}
   {
    Let $\hat{\mu}_t \in \Pi(\Ycal)$ such that  $\hat{\mu}_t(A) > 1-\gamma_N$  for all $ A \in \Scal_t(\Ycal)$.}

    \uElse{
    
    \For{ i = 1, \ldots, N}{
     
            If $\texttt{MS}_{\gamma_i}(V_{t-1}) = 0$, let $\mu^i_t \in \Pi(\Ycal)$  such that $\mu^i_t(A) > 1-\gamma_i$ for all $A \in \Scal_t(\Ycal)$.
            
            Else, let
                \[\mu^i_t = \argmin_{\mu \in \Pi(\Ycal)} \,\max_{\substack{A \in \Scal(\Ycal) \\ \mu(A) \leq 1 - \gamma_i}} \, \texttt{MS}_{\gamma_i}(V_{t-1}(A)).\]
 
            }
        Compute $m_t = \text{MSP}(\{\mu_t^i\}_{i=1}^N, \mathcal{S}_t(\mathcal{Y}))$ and let $\hat{\mu}_t = \mu_t^{m_t}$.
        }

    Predict $\hat{y}_t \sim \hat{\mu}_t.$
    
    Receive feedback $S_t\in \Scal_t(\Ycal)$ and update $V_t = V_{t-1}(S_t).$
}
\end{algorithm}

We now prove Lemma \ref{lem:multiscale}, which closely follows the analysis by \cite{daskalakis2022fast}.

\begin{proof}
    Fix a $N \in \mathbbm{N}$. Our first goal is to show that on any realizable stream, the expected cumulative loss of Algorithm \ref{alg:multiscale} is at most

    $$\gamma_N \, T + 16\sum_{i=1}^N \gamma_i \cdot \texttt{MS}_{\gamma_i}(\mathcal{H}),$$

    where $\gamma_i = \frac{1}{2^i}$. To that end, let $\{(x_t, S_t)\}_{t=1}^T$ denote the realizable stream that is to be observed by the learner. For all $t \in [T+1]$, define the potential function 

    $$\Phi_t = (T +1 - t)\gamma_N + 16\sum_{i=1}^N\gamma_i \texttt{MS}_{\gamma_i}(V_{t-1}).$$

    It suffices to show that $\Phi_t - \Phi_{t+1} \geq \hat{\mu}_t(S_t^c)$ for all $t \in [T]$. To see why this is sufficient, observe that summing over all $t \in [T]$ gives 

    \begin{align*}
    \sum_{t=1}^T \hat{\mu}_t(S_t^c) \leq \sum_{t=1}^T (\Phi_t - \Phi_{t+1})&= \Phi_1 - \Phi_{T+1}\leq T\gamma_N + 16\sum_{i=1}^N\gamma_i \texttt{MS}_{\gamma_i}(\mathcal{H})
    \end{align*}

where the inequality follows from the fact that $\Phi_{T+1} \geq 0$ and $V_0 = \mathcal{H}$. Finally, noting that $\mathbbm{E}_{\hat{y}_t \sim \hat{\mu}_t}\left[\mathbbm{1}\{\hat{y}_t \notin S_t\}\right] = \hat{\mu}_t(S_t^c)$ gives $\mathbbm{E}\left[ \sum_{t=1}^T \mathbbm{1}\{\hat{y}_t \notin S_t\}\right] \leq T\gamma_N + 16\sum_{i=1}^N\gamma_i \texttt{MS}_{\gamma_i}(\mathcal{H})$ as desired. 

The rest of this proof is dedicated to showing that $\Phi_t - \Phi_{t+1} \geq \hat{\mu}_t(S_t^c)$ for all $t \in [T]$. Fix a $t \in [T]$. Using the definition of $\Phi_t$, we need to show that

\begin{equation}\label{eq:multiscale}
\gamma_N + 16\sum_{i=1}^N \gamma_i (\texttt{MS}_{\gamma_i}(V_{t-1}) - \texttt{MS}_{\gamma_i}(V_{t})) \geq \hat{\mu}_t(S_t^c).
\end{equation}

If $\hat{\mu}_t(S_t^c) < \gamma_N$, then Inequality \ref{eq:multiscale} holds since for all $t \in [T]$ and $i \in [N]$, $\texttt{MS}_{\gamma_i}(V_{t-1}) \geq \texttt{MS}_{\gamma_i}(V_{t}).$ Thus, we focus on the case where $\hat{\mu}_t(S_t^c) \geq \gamma_N$.  

Suppose $\hat{\mu}_t(S_t^c) \geq \gamma_N$. Then,  $\texttt{MS}_{\gamma_N}(V_{t-1}) \geq 1$, the for-loop on line 5(a) runs, and the measure $\hat{\mu}_t = \mu_t^{m_t}$ computed on line 5(b) is used to make a prediction. This is because when  $\texttt{MS}_{\gamma_N}(V_{t-1}) = 0$, we are guaranteed the existence of a measure $\hat{\mu}_t \in \Pi(\mathcal{Y})$ such that $\hat{\mu}_t(S_t^c) < \gamma_N$ (see proof of Theorem \ref{thm:rand_real}) and by line 4, this would have precisely been the measure the learner uses to make its prediction.

 We now show that when $\hat{\mu}_t(S_t^c) \geq \gamma_N$, there exists an index $j \in [N]$ such that $\gamma_j \geq \frac{\hat{\mu}_t(S_t^c)}{16}$ and $\mu_t^j(S_t^c) \geq \gamma_j$. This implies Inequality (2), because if $\mu_t^j(S_t^c) \geq \gamma_j$, then $\texttt{MS}_{\gamma_j}(V_{t-1}) \geq 1$, and $\texttt{MS}_{\gamma_j}(V_{t}) < \texttt{MS}_{\gamma_j}(V_{t-1})$, which follows from the definition of MSdim, and the min-max prediction strategy in step 5(a:ii). Then, we can compute
 \begin{align*}
 \gamma_N + 16\sum_{i=1}^N \gamma_i (\texttt{MS}_{\gamma_i}(V_{t-1}) - \texttt{MS}_{\gamma_i}(V_{t})) &\geq 16 \gamma_j (\texttt{MS}_{\gamma_j}(V_{t-1}) - \texttt{MS}_{\gamma_j}(V_{t})) \geq \hat{\mu}_t(S_t^c),
 \end{align*}
 which matches the guarantee of Inequality \ref{eq:multiscale}. Accordingly, the rest of the proof will focus on showing the existence of such an index $j \in [N]$.  To do so, let $k \in \mathbbm{N}$ denote the smallest natural number such that $\hat{\mu}_t(S_t^c) \geq \gamma_k = \frac{1}{2^k}$. By definition of $k$, we have that $\hat{\mu}(S_t^c) < \gamma_{k-1} = 2\gamma_k$. Note that $k \neq N+1$ since that would imply that  $\hat{\mu}(S_t^c) < \frac{1}{2^N} = \gamma_N$ which contradicts the fact that $\hat{\mu}_t(S_t^c) \geq \gamma_N$. Thus, it must be the case that $k \in \{1, ..., N\}$. Let $m_t = \text{MSP}(\{\mu_t^i\}_{i=1}^N, \mathcal{S}_t(\mathcal{Y}))$ denote the index output by MSP in round $t$. We consider two subcases: (1) $k \in \{m_t + 1, ..., N\}$ and (2) $k \in \{1, ..., m_t\}$. 

 \textbf{Case I.} Suppose $k \in \{m_t + 1, ..., N\}$. Then, we show that $j = m_t+1$. That is, $\gamma_{m_t+1} \geq \frac{\hat{\mu}_t(S_t^c)}{16}$ and $\mu_t^{m_t+1}(S_t^c) \geq \gamma_{m_t+1}$ Recall that $\hat{\mu}_t(S_t^c) = \mu_t^{m_t}(S_t^c)$. Since $m_t < N$, by definition, we have that $\inf_{A \in \mathcal{S}_t(\mathcal{Y})}|\mu_t^{m_t}(A) - \mu_t^{m_t + 1}(A)| \geq 2\gamma_{m_t}$. This implies that  $|\mu_t^{m_t}(S_t^c) - \mu_t^{m_t + 1}(S_t^c)| \geq 2\gamma_{m_t}$. Moreover, we have that $\mu_t^{m_t}(S_t^c) = \hat{\mu}_t(S_t^c) < 2\gamma_k \leq 2\gamma_{m_t+1} = \gamma_{m_t}$. Combining the two inequalities, we get that $\mu_t^{m_t + 1}(S_t^c) \geq \gamma_{m_t} > \gamma_{m_t + 1}$. Since $\hat{\mu}_t(S_t^c) < 2\gamma_{m_t + 1}$, we also obtain $\gamma_{m_t + 1} \geq \frac{\hat{\mu}_t(S_t^c)}{2} > \frac{\hat{\mu}_t(S_t^c)}{16}$. This completes this case. 

Now, suppose that $k \in \{1, ..., m_t\}$. Then we know that $\mu_t^{m_t}(S_t^c) = \hat{\mu}_t(S_t^c) \geq \gamma_k \geq \gamma_{m_t}$. We further break this case down into two subcases: (a) $k \in \{m_t - 3, m_t - 2, ..., m_t\}$ and (b) $k \in \{1, ..., m_t-4\}$. 

\textbf{Case II(a).} Consider the case where $k \in \{m_t - 3, m_t - 2, ..., m_t\}$. We show that $j = m_t$. We know that $\hat{\mu}_t(S_t^c) < 2 \gamma_k = 2\frac{1}{2^k} = 16\gamma_{k + 3} \leq 16\gamma_{m_t}$. This implies that $\gamma_{m_t} \geq \frac{\hat{\mu}_t(S_t^c)}{16}$. Since we have that $\mu_t^{m_t}(S_t^c) = \hat{\mu}_t(S_t^c) \geq \gamma_{m_t}$, this completes the proof that $j = m_t$. 

\textbf{Case II(b).} Consider the case where $k \in \{1, ..., m_t-4\}$. Here, we will show that $j = k + 1$. Observe that, 
\begin{align*}
|\mu_t^{m_t}(S_t^c) - \mu_t^{k+3}(S_t^c)| \leq \sum_{i = k+3}^{m_t - 1}|\mu_t^{i}(S_t^c) - \mu_t^{i+1}(S_t^c)|
&\leq \sum_{i=k+3}^{m_t - 1} 2 \gamma_i\\
&\leq 2 \sum_{i=k+3}^{\infty} \frac{1}{2^i}
= 4\gamma_{k+3} = \frac{\gamma_k}{2}, 
\end{align*}
where the second inequality follows from the definition of $m_t = \text{MSP}(\{\mu_t^i\}_{i=1}^N, \mathcal{S}_t(\mathcal{Y}))$. This implies that $\mu_t^{m_t}(S_t^c) - \mu_t^{k+3}(S_t^c) \leq \frac{\gamma_k}{2}$. Since $\mu_t^{m_t}(S_t^c) \geq \gamma_k$, we get that $\mu_t^{k+3}(S_t^c) \geq \frac{\gamma_k}{2} = 4\gamma_{k+3} \geq \gamma_{k+3}$. Finally, recall that $\hat{\mu}_t(S_t^c) < 2\gamma_k = 16 \gamma_{k+3}$, implying that $\gamma_{k+3} \geq \frac{\hat{\mu}_t(S_t^c)}{16}$ as desired. This completes the subcase. 

Overall, we have shown that when $\hat{\mu}_t(S_t^c) \geq \gamma_N$, there exists an index $j \in [N]$ such that $\gamma_j \geq \frac{\hat{\mu}_t(S_t^c)}{16}$ and $\mu_t^j(S_t^c) \geq \gamma_j$. This means that for all $t \in [T]$, $\Phi_t - \Phi_{t+1} \geq \hat{\mu}_t(S_t^c)$ and therefore the expected cumulative loss of Algorithm \ref{alg:multiscale} is at most $\gamma_N \, T + \sum_{i=1}^N \gamma_i \cdot \texttt{MS}_{\gamma_i}(\mathcal{H}),$ as needed.

Our next goal is to show that if $\gamma^* = \inf_{\gamma > 0}\{\gamma T + \int_{\gamma}^{1} \texttt{MS}_{\eta}(\mathcal{H}) d\eta\}$, then setting $N = \ceil{\frac{1}{\log 2 \gamma^*}}$ gives that 

$$\gamma_N \, T + 16\sum_{i=1}^N \gamma_i \cdot \texttt{MS}_{\gamma_i}(\mathcal{H}) \leq C\inf_{\gamma > 0}\Bigg\{\gamma T + \int_{\gamma}^1 \texttt{MS}_{\eta}(\mathcal{H}) d\eta\Bigg\}$$

for some constant $C > 0$. However, this follows from the fact that when $N = \ceil{\frac{1}{\log 2 \gamma^*}}$, $\gamma_N \leq 2\gamma^*$ and the fact that $16\sum_{i=1}^N \gamma_i \cdot \texttt{MS}_{\gamma_i}(\mathcal{H})$ is, up to a constant factor, the appropriate lower Reimann sum such that $16 \sum_{i=1}^N \gamma_i \cdot \texttt{MS}_{\gamma_i}(\mathcal{H}) \leq C \int_{\gamma^*}^1 \texttt{MS}_{\eta}(\mathcal{H}) d\eta$.
\end{proof}

\subsection{Lowerbounds}

In this section, we prove the lowerbound given in Theorem \ref{thm:rand_real}.  Fix $\gamma > 0$. Let $\mathcal{H}$ and $\mathcal{S}(\mathcal{Y})$ be such that $\texttt{MS}_{\gamma}(\mathcal{H}) = d_{\gamma}$. By definition of MSdim, there exists a $\mathcal{X}$-valued, $\Pi(\mathcal{Y})$-ary tree $\mathcal{T}$ of depth $d_{\gamma}$ shattered by $\mathcal{H}$. Let $(f_1, ..., f_d)$ be the sequence of edge-labeling functions $f_t: \Pi(\mathcal{Y})^t \rightarrow \mathcal{S}(\mathcal{Y})$ associated with $\mathcal{T}$. Let $\mathcal{A}$ be any randomized learner for $\mathcal{H}$. Our goal will be to use $\mathcal{T}$ and its edge-labeling functions $(f_1, ..., f_d)$ to construct a hard realizable stream for $\mathcal{A}$ such that on every round, $\mathcal{A}$ makes a mistake with probability at least $\gamma$. This stream is obtained by traversing $\mathcal{T}$, adapting to the sequence of distributions output by $\mathcal{A}$. 

To that end, for every round $t \in [d_{\gamma}]$,  let $\hat{\mu}_t$ denote the distribution that $\mathcal{A}$ computes before making its prediction $\hat{y}_t$. Consider the stream $\{\left(\mathcal{T}_t(\hat{\mu}_{<t}), f_t(\hat{\mu}_{\leq t})\right)\}_{t=1}^{d_{\gamma}}$, where $\hat{\mu} = 
(\hat{\mu}_1, \ldots, \hat{\mu}_{d_{\gamma}})$ denotes the sequence of distributions output by $\Acal$. This stream is obtained by starting at the root of $\mathcal{T}$, passing $\mathcal{T}_1$ to $\mathcal{A}$, observing the distribution $\hat{\mu}_1$ computed by $\mathcal{A}$, passing the label $f_t(\hat{\mu}_{\leq 1})$ to $\mathcal{A}$, and then finally moving along the edge labeled by $\hat{\mu}_1$. This process then repeats $d_{\gamma} -1$ times until the bottom of $\mathcal{T}$ is reached. Note that we can observe and use the distribution computed by $\mathcal{A}$ on round $t$ to generate the true feedback because a randomized algorithm \textit{deterministically} maps a sequence of labeled instances to a distribution. Moreover the stream is realizable since by the definition of shattering, there exists a $h_{\hat{\mu}} \in \mathcal{H}$ such that $h_{\hat{\mu}}(\mathcal{T}_t(\hat{\mu}_{<t})) \in f_t(\hat{\mu}_{\leq t})$ for all $t \in [d_{\gamma}]$.

Now, we are ready to show that this stream is difficult for $\mathcal{A}$. By definition of the tree, for all $t \in [d_{\gamma}]$, we have that $\hat{\mu}_t(f_t(\hat{\mu}_{\leq t})) \leq 1 - \gamma$. Therefore, since $\mathcal{A}$ receives $f_t(\hat{\mu}_{\leq t})$ as feedback on round $t$, we have that $\mathbbm{P}\left[\mathcal{A}(\mathcal{T}_t(\hat{\mu}_{<t})) \notin f_t(\hat{\mu}_{\leq t})\right] = \mathbbm{P}_{\hat{y}_t \sim \hat{\mu}_t}\left[\hat{y}_t \notin f_t(\hat{\mu}_{\leq t})\right] = 1- \hat{\mu}_t(f_t(\hat{\mu}_{\leq t})) \geq \gamma$ for all $t \in [d_{\gamma}]$. Summing over all $t \in [d_{\gamma}]$ gives that 

$$\mathbbm{E}\left[\sum_{t=1}^{d_{\gamma}} \mathbbm{1}\{\mathcal{A}(\mathcal{T}_t(\hat{\mu}_{<t})) \notin f_t(\hat{\mu}_{\leq t})\} \right] = \sum_{t=1}^{d_{\gamma}} \mathbbm{P}\left[\mathcal{A}(\mathcal{T}_t(\hat{\mu}_{<t})) \notin f_t(\hat{\mu}_{\leq t})\right] \geq \gamma\, d_{\gamma}.$$

This shows that $\mathcal{A}$ makes at least $\gamma d_{\gamma}$ mistakes in expectation on the realizable stream $\{\left(\mathcal{T}_t(\hat{\mu}_{<t}), f_t(\hat{\mu}_{\leq t})\right)\}_{t=1}^{d_{\gamma}}$. Since our choice of $\gamma$ and the randomized algorithm $\mathcal{A}$ was arbitrary, this holds true for any $\gamma > 0$ and any randomized online learner. This completes the proof.   

\section{Agnostic Learnability}
\subsection{Agnostic Upperbound}
\begin{proof} (of (i) in Theorem \ref{thm:agn})
Let $(x_1, S_1), \ldots, (x_T, S_T) $ be the data stream.  Let $h^{\star} = \argmin_{h \in \mathcal{H}} \sum_{t=1}^T \mathbbm{1}\{h(x_t) \notin S_t\}$ be an optimal function in hind-sight.  For a target accuracy $\varepsilon > 0$,  let $d_{\varepsilon} = \texttt{MS}_{\varepsilon}(\Hcal)$. Given time horizon $T$, let  $L_T = \{L \subset [T]; |L| \leq d_{\varepsilon}\}$ denote the set of all possible subsets of $[T]$ with size at most $d_{\varepsilon}$. For every $L \in L_T$ define an expert $E_{L}$ such that 
\[E_L(x_t) := \text{RSOA}_{\varepsilon}(x_t \mid L_{<t}),\]
where $ L_{<t} = L \cap \{1,2, \ldots, t-1\}$ and $ \text{RSOA}_{\varepsilon}(x_t \mid L_{<t})$ is the prediction of the Randomized Standard Optimal Algorithm (RSOA), defined as Algorithm \ref{alg:rand_SOA}, running at scale $\varepsilon$ that has updated on labeled examples $\{(x_i, S_i)\}_{i \in L_{<t}}$. 
Let $\mathcal{E} = \bigcup_{L \in L_T} \{E_{L}\}$ denote the set of all Experts parameterized by subsets $L \in L_T$. Note that $|\mathcal{E}| = \sum_{i=0}^{d_{\varepsilon}} \binom{T}{i} \leq T^{d_{\varepsilon}}$. Finally, given our set of experts $\mathcal{E}$, we run the Randomized Exponential Weights Algorithm (REWA), denoted hereinafter as $\mathcal{P}$, over the stream $(x_1, S_1), ..., (x_T, S_T)$ with a learning rate $\eta = \sqrt{2 \ln(|\mathcal{E}|)/T}$. Let $B$ denote the random variable associated with the internal randomness of the RSOA. Then, conditioned on $B$,  Theorem 21.11 of \cite{ShwartzDavid} tells us that
\begin{align*}
   \sum_{t=1}^T \mathbb{E}\left[ \mathbbm{1}\{\mathcal{P}(x_t) \notin S_t \} \mid B\right] &\leq \inf_{E \in \mathcal{E}} \sum_{t=1}^T \mathbbm{1}\{E(x_t) \notin S_t\} + \sqrt{2T\ln(|\mathcal{E}|)}\\
    &\leq \inf_{E \in \mathcal{E}} \sum_{t=1}^T \mathbbm{1}\{E(x_t) \notin S_t\} + \sqrt{2d_{\varepsilon}T\ln(T)}, 
\end{align*}
where the second inequality follows because $|\mathcal{E}| \leq T^{d_{\varepsilon}}$. Taking expectations on both sides of the inequality above, we obtain
\[ \mathbb{E}\left[\sum_{t=1}^T  \mathbbm{1}\{\mathcal{P}(x_t) \notin S_t \} \right] \leq \expect \left[\inf_{E \in \mathcal{E}} \sum_{t=1}^T \mathbbm{1}\{E(x_t) \notin S_t\} \right]+ \sqrt{2d_{\varepsilon}T\ln(T)},\]
Here, we  have an expectation on the right-hand side because the Expert predictions are random.
Define $R^{\star} = \{t \in [T]\, \mid \, h^{\star}(x_t) \in S_t\}$ to be the part of the stream realizable by $h^{\star}$. Note that the set $R^{\star}$ is not random because the adversary is oblivious. Then, we have 
\begin{equation*}
    \begin{split}
       \inf_{E \in \mathcal{E}} \sum_{t=1}^T \mathbbm{1}\{E(x_t) \notin S_t\} &=  \inf_{E \in \mathcal{E}} \left( \sum_{t \in R^{\star}}\mathbbm{1}\{E(x_t) \notin S_t\} + \sum_{t \notin R^{\star}}\mathbbm{1}\{E(x_t) \notin S_t\} \right) \\
       &\leq \inf_{E \in \mathcal{E}}  \sum_{t \in R^{\star}}\mathbbm{1}\{E(x_t) \notin S_t\} +  \sum_{t \notin R^{\star}}\mathbbm{1}\{h^{\star}(x_t) \notin S_t\} \\
       &= \inf_{E \in \mathcal{E}}  \sum_{t \in R^{\star}}\mathbbm{1}\{E(x_t) \notin S_t\} + \inf_{h \in \Hcal}  \sum_{t =1}^T\mathbbm{1}\{h(x_t) \notin S_t\},
    \end{split}
\end{equation*}
where the first inequality above follows because $\mathbbm{1}\{h^{\star}(x_t) \notin S_t\} =1$ for all $t \in R^{\star}$. Thus, the expected cumulative loss of $\mathcal{P}$ is 
\begin{equation}\label{eqn:agn_intermediate}
    \mathbb{E}\left[\sum_{t=1}^T  \mathbbm{1}\{\mathcal{P}(x_t) \notin S_t \} \right] \leq \inf_{h \in \Hcal}  \sum_{t =1}^T\mathbbm{1}\{h(x_t) \notin S_t\}+ \expect \left[\inf_{E \in \mathcal{E}} \sum_{t \in R^{\star}} \mathbbm{1}\{E(x_t) \notin S_t\} \right]+ \sqrt{2d_{\varepsilon}T\ln(T)}
\end{equation}
Thus, it suffices to show that the second term on the right side of the inequality above is $\leq d_{\varepsilon} + \varepsilon T$. 

 To do so, we need some more notation. Let us define $\hat{\mu}_t = \mu$$\text{-RSOA}_{\varepsilon}(x_t \mid L)$ to be the measure returned by $\text{RSOA}_{\varepsilon}$, as described in step 4 and 5 of Algorithm \ref{alg:rand_SOA}, for $x_t$ given that the algorithm has been updated on examples of the time points $t \in L$. We say that $\mu\text{-RSOA}_{\varepsilon}$ makes a mistake on round $t$ if $\indicator\{\hat{\mu}_t(S^c_t) \geq \varepsilon\}=1 $. With this notion of the mistake, Equation \eqref{eq:randSOA} tells us that $\text{RSOA}_{\varepsilon}$, run and updated on any realizable sequence, makes at most $d_{\varepsilon}$ mistakes.   Since $\mu\text{-RSOA}_{\varepsilon}(x \mid L)$ is a deterministic mapping from the past examples to a probability measure in $\Pi(\Ycal)$, we can procedurally define and select a sequence of time points in $R^{\star}$ where $\mu\text{-RSOA}_{\varepsilon}$, had it run exactly on this sequence of time points,  would make mistakes at each time point. To that end, let
$$\tilde{t}_1 = \min \Big\{\, t \in R^{\star}: \hat{\mu}_t(S^c_t) \geq \varepsilon  \text{ where }  \hat{\mu}_t =\mu\text{-RSOA}_{\varepsilon}\big(x_t|\,\{\}\big)  \Big\}$$
be the earliest time point in $R^{\star}$, where a fresh, unupdated copy of $\mu\text{-RSOA}_{\varepsilon}$ makes a mistake, if it exists. Given $\tilde{t}_1$, we recursively define $\tilde{t}_i$ for $i > 1$ as 

$$\tilde{t}_i = \min \Big\{\, t \in R^{\star}: \hat{\mu}_t(S^c_t) \geq \varepsilon  \text{ where }  \hat{\mu}_t =\mu\text{-RSOA}_{\varepsilon}\big(x_t|\, \{\tilde{t}_1, \ldots, \tilde{t}_{i-1}\}\big)  \text{ and } t > \tilde{t}_{i-1} \Big\}$$
if it exists. That is, $\tilde{t}_i$ is the earliest timepoint after $\tilde{t}_{i-1}$ in $R^{\star}$ where $\mu\text{-RSOA}_{\varepsilon}$ having updated only on the sequence $(x_{\tilde{t}_1}, S_{\tilde{t}_1}), ..., (x_{\tilde{t}_{i-1}}, S_{\tilde{t}_{i-1}}))$ makes a mistake. We stop this process when we reach an iteration where no such time point in $R^{\star}$ can be found where $\mu\text{-RSOA}_{\varepsilon} $ makes a mistake. 

Using the definitions above, let $\tilde{t}_1, \tilde{t}_2 ..., $ denote the sequence of timepoints in $R^{\star}$ selected via this recursive procedure.  Define $L^{\star} = \{\tilde{t}_1, \tilde{t}_2..., \}$ and let $E_{L^{\star}}$ be the expert parametrized by the set of indices $L^{\star}$. The expert $E_{L^{\star}}$ exists because $R^{\star}$ is a part of the stream that is realizable to $h^{\star}$ and Equation \eqref{eq:randSOA} implies that $|L^{\star}| \leq d_{\varepsilon}$.  By definition of the expert, we have $E_{L^{\star}}(x_t) = \text{RSOA}_{\varepsilon}(x_t \mid L^{\star}_{<t})$ for all $t \in [T]$. Let us define $\hat{\mu}_t^{\star} = \mu\text{-RSOA}_{\varepsilon}(x_t \mid L_{<t}^{\star})$. Then, we have 
\begin{equation*}
    \begin{split}
        \inf_{E \in \mathcal{E}} &\sum_{t \in R^{\star}} \mathbbm{1}\{E(x_t) \notin S_t\}  \\
        &\leq \sum_{t \in R^{\star}} \mathbbm{1}\{E_{L^{\star}}(x_t) \notin S_t\} \\ 
        &=\sum_{t \in R^{\star}} \mathbbm{1}\{\text{RSOA}_{\varepsilon}(x_t \mid L_{<t}^{\star}) \notin S_t\}\,   \indicator\{\hat{\mu}_t^{\star}(S^c_t) < \varepsilon\} + \sum_{t \in R^{\star}} \mathbbm{1}\{\text{RSOA}_{\varepsilon}(x_t \mid L_{<t}^{\star}) \notin S_t\}\,   \indicator\{\hat{\mu}_t^{\star}(S^c_t) \geq \varepsilon\} \\
        &\leq \sum_{t \in R^{\star}}  \mathbbm{1}\{\text{RSOA}_{\varepsilon}(x_t \mid L_{<t}^{\star}) \notin S_t\}\,   \indicator\{\hat{\mu}_t^{\star}(S^c_t) < \varepsilon\}+  \sum_{t \in R^{\star}} \indicator\{\hat{\mu}_t^{\star}(S^c_t) \geq \varepsilon\} \\
        &\leq \sum_{t \in R^{\star}} \mathbbm{1}\{\text{RSOA}_{\varepsilon}(x_t \mid L_{<t}^{\star}) \notin S_t\}\,   \indicator\{\hat{\mu}_t^{\star}(S^c_t) < \varepsilon\} + d_{\varepsilon},
    \end{split}
\end{equation*}
where the last inequality follows from the definition of $L^{\star}$ and the fact that $|L^{\star}| \leq d_{\varepsilon}$. Since 
\[\expect\left[\mathbbm{1}\{\text{RSOA}_{\varepsilon}(x_t \mid L_{<t}^{\star}) \notin S_t\}\,   \indicator\{\hat{\mu}_t^{\star}(S^c_t) < \varepsilon\} \right] = \hat{\mu}_t^{\star}(S^c_t)\, \indicator\{\hat{\mu}_t^{\star}(S^c_t) < \varepsilon\}  \leq \varepsilon ,\]
we obtain
\[\expect \left[\inf_{E \in \mathcal{E}} \sum_{t \in R^{\star}} \mathbbm{1}\{E(x_t) \notin S_t\} \right] \leq \varepsilon |R^{\star}| + d_{\varepsilon} \leq \varepsilon T + d_{\varepsilon}.\]

Finally, plugging this bound in Equation \eqref{eqn:agn_intermediate} yields
\[  \mathbb{E}\left[\sum_{t=1}^T  \mathbbm{1}\{\mathcal{P}(x_t) \notin S_t \} \right] \leq \inf_{h \in \Hcal}  \sum_{t =1}^T\mathbbm{1}\{h(x_t) \notin S_t\}+d_{\varepsilon}+ \varepsilon T+  \sqrt{2d_{\varepsilon}T\ln(T)}.\]
Since $\varepsilon > 0$ is arbitrary, this completes our proof.
\end{proof}
\subsection{Agnostic Lowerbound}

\begin{proof}(of (ii) in Theorem \ref{thm:agn})
Let $d = \texttt{SL}_2(\Hcal)$ and $d_{\gamma} = \texttt{MS}_{\gamma}(\Hcal)$ for $\gamma \in [0, 1]$.
The lowerbound of $\sup_{\gamma  > 0 } \, \gamma\,  d_{\gamma}$ on the expected regret in the agnostic setting follows trivially from the lowerbound on the expected cumulative loss in the realizable setting (see (ii) in Theorem \ref{thm:rand_real}). Moreover, when $\sup_{\gamma > 0} d_{\gamma} = 0$, there is no non-negative lowerbound on the expected regret. Indeed, consider the case where $\mathcal{Y} = [5]$, $\mathcal{S}(\mathcal{Y}) = \{\{3, 4\}, \{4, 5\}\})$ , and $\mathcal{H} = \{h_1, h_2\}$, where $h_i$ is a constant hypothesis that always outputs $i$. Then, $\sup_{\gamma  > 0 } d_{\gamma} = 0$ trivially. However, the expected regret of the algorithm that always outputs $4$ is $-T$.

Next, we will focus on showing how the lowerbound of $\sqrt{\frac{d\,T}{8}}$ can be obtained. When $d=0$, the claimed lowerbound is $\max\big\{\sqrt{dT/8}\, , \,  \sup_{\gamma  > 0 } \,  d_{\gamma}\big\} = \sup_{\gamma  > 0 } \, \gamma\,  d_{\gamma}$, which we have already established. Let $d > 0$ and $\Tcal$ be a  $\texttt{SL}_2$ tree of depth $d$ shattered by $\Hcal$. With a binary tree $\Tcal$, we now use the technique from \cite{ben2009agnostic} to obtain the aforementioned lowerbound. 

Consider  $T = k\,d$ for some odd $k \in \naturals$. For $\sigma \in \{\pm 1\}^{T}$, define $\tilde{\sigma}_{i} = \text{sign}\left( \sum_{t=(i-1)k+1}^{ik}\, \sigma_{t}\right)$ for all $i \in \{1,2 \ldots, d\}$. Note that the sequence $(\tilde{\sigma}_1, 
    \ldots,\tilde{\sigma}_d)$ gives a path down the tree $\Tcal$. The game proceeds as follows. The adversary samples a string $\sigma\in \{ \pm 1\}^T$ uniformly at random and generates a sequence of labeled instances $(x_1, S_1), \ldots (x_T, S_T)$ such that for all $i \in \{1,2, \ldots, d\}$ and all $t \in \{(i-1)k+1, \ldots, ik\} $, we have $x_t = \Tcal_{i}(\tilde{\sigma}_{< i})$ and $S_t = f_{i}((\tilde{\sigma}_{<i}, \sigma_t))$.  That is, on round $t \in \{(i-1)k+1, \ldots, ik\}$,  the adversary always reveals the instance $\Tcal_{i}(\tilde{\sigma}_{< i})$ but alternates between revealing the sets labeling the left and right outgoing edges from $\Tcal_{i}(\tilde{\sigma}_{< i})$  depending on $\sigma_t$.

    Let $\Acal$ be any randomized online learner. Then, for each block $i\in [d]$, we have
    \[\expect\left[\sum_{t=(i-1)k+1}^{ik} \indicator\left\{ \Acal(x_t) \notin S_t\right\} \right] \geq \sum_{t=(i-1)k+1}^{ik} \frac{1}{2} = \frac{k}{2}.\]
    The inequality above holds because $S_t$ is chosen uniformly at random from two disjoint sets $f_{i}((\tilde{\sigma}_{<i}, -1))$ and $f_{i}((\tilde{\sigma}_{<i}, +1))$, so the expected loss of any randomized algorithm is at least $1/2$.

    Let $h_{\tilde{\sigma}}$ be the hypothesis at the end of the path $(\tilde{\sigma}_1, \ldots, \tilde{\sigma}_d)$ in $\Tcal$. For each block $i \in [d]$, we have
    \begin{equation*}
        \begin{split}
            \expect\left[\sum_{t=(i-1)k+1}^{ik} \indicator\left\{ h_{\tilde{\sigma}}(x_t) \notin S_t\right\} \right] = \expect\left[\sum_{t=(i-1)k+1}^{ik} \indicator\{\tilde{\sigma}_{i} \neq \sigma_t\} \right] 
            &= \frac{k}{2} - \frac{1}{2} \expect\left[\sum_{t=(i-1)k+1}^{ik} \tilde{\sigma}_i \, \sigma_t \right] \\
            &= \frac{k}{2} - \frac{1}{2} \expect\left[ \,\left|\sum_{t=(i-1)k+1}^{ik}  \sigma_j \right|\, \right] \\
            &\leq \frac{k}{2} - \sqrt{\frac{k}{8}},
        \end{split}
    \end{equation*}
 where the final step follows upon using Khinchine's inequality \cite[Page 364]{cesa2006prediction}. Combining these two bounds above, we obtain
 \[\expect\left[\sum_{t=(i-1)k+1}^{ik} \indicator\left\{ \Acal(x_t) \notin S_t\right\} - \sum_{t=(i-1)k+1}^{ik}\indicator\left\{ h_{\tilde{\sigma}}(x_t) \notin S_t\right\}\right] \geq \sqrt{\frac{k}{8}}. \]
 Summing this inequality over $d$ blocks, we obtain
 \begin{equation*}
     \begin{split}
         \expect\left[\sum_{t=1}^{T} \indicator\left\{ \Acal(x_t) \notin S_t\right\} - \inf_{h \in \Hcal}\sum_{t=1}^{T}\indicator\left\{ h(x_t) \notin S_t\right\}\right] &\geq \expect\left[\sum_{t=1}^{T} \indicator\left\{ \Acal(x_t) \notin S_t\right\} - \sum_{t=1}^{T}\indicator\left\{ h_{\tilde{\sigma}}(x_t) \notin S_t\right\}\right]\\
         &\geq d\,\sqrt{\frac{k}{8}} = \sqrt{\frac{dT}{8}}.
     \end{split}
 \end{equation*}
which completes our proof.\end{proof}

\section{Applications} \label{appdx:applications}

\subsection{Online Multilabel Ranking}\label{appdx:mlr}
In this section, we prove Lemma \ref{lem:helly_rank}, establishing lower and upperbounds on Helly numbers of permutation sets. Before we prove Lemma \ref{lem:helly_rank}, we define some new notation. For any bit string $r \in \mathcal{R}$, let $P(r) := \{i: r^i = 1\}$ and let $|r| := |P(r)|$ denote the number of 1's. Given two bit strings $r_1, r_2$ where $|r_1| \geq |r_2|$, we say that $r_2 \subseteq r_1$ iff $P(r_2)\subseteq P(r_1)$. The following property will also be useful. Let $r_1, r_2 \in \mathcal{R}$ and without loss of generality suppose $|r_1| \geq |r_2|$. If $\mathcal{Y}(r_1) \cap \mathcal{Y}(r_2) \neq \emptyset$ then $r_2 \subseteq r_1$. To prove the contraposition, suppose that  $r_2 \nsubseteq r_1$. Then, there exist an index $j \in [K]$ such that $r_2^j = 1$ but $r_1^j = 0$. Thus, every permutation in $\mathcal{Y}(r_2)$ ranks label $j$ in the top $|r_2|$, but every permutation in $\mathcal{Y}(r_1)$ ranks label $j$ outside the top $|r_1|$. That is, for all $\pi_2 \in \mathcal{Y}(r_2)$ we have $\pi_2^j \leq |r_2|$ but for all $\pi_1 \in \mathcal{Y}(r_1)$, we have $\pi_1^j > |r_1|$. Since $|r_2| \leq |r_1|$, we have $\mathcal{Y}(r_1) \cap \mathcal{Y}(r_2) = \emptyset$.  We are now ready to prove the main claim. At a high-level, our proof exploits the fact that if we have a sequence of bit strings such that $r_Q \subseteq r_{Q-1} 
\subseteq  ... \subseteq r_1$, then we can iteratively construct a permutation that lies in all $\mathcal{Y}(r_i)$. 

\begin{proof}(of Lemma \ref{lem:helly_rank})
 Let $Q \geq 2$ and let $\{r_i\}_{i = 1}^Q \subseteq \mathcal{R}$ be a sequence of bit strings. It suffices to show that if for all $i, j \in [Q]$ we have $\mathcal{Y}(r_i) \cap \mathcal{Y}(r_j) \neq \emptyset$, then we have $\bigcap_{i \in [Q]} \mathcal{Y}(r_i) \neq \emptyset$. Without loss of generality, suppose $\{r_i\}_{i = 1}^Q$ is sorted in increasing order of size. That is, for all $i, j \in [Q]$ such that $i > j$, we have $|r_i| \geq |r_j|$. Then, by the property above, for all $i, j \in [Q]$ where $i > j$ we have $r_j \subseteq r_i$. We now construct a permutation $\pi: [K] \rightarrow [K]$ such that for all $i \in [Q]$, we have $\pi \in \mathcal{Y}(r_i)$.

For every $i \in \{2, ..., Q\}$, let $\phi_{i}: P(r_{i}) \setminus P(r_{i-1})\rightarrow [|r_{i}|] \setminus [|r_{i-1}|]$ denote an arbitrary bijective mapping from $P(r_{i}) \setminus P(r_{i-1})$ to $[|r_{i}|] \setminus [|r_{i-1}|]$. For $i = 1$, let $\phi_{1}: P(r_{1}) \rightarrow [|r_{1}|]$ be a bijective mapping from $P(r_{1})$ to $[|r_{1}|]$. Finally, let  $\phi_{Q+1}: [K] \setminus P(r_Q) \rightarrow [K] \setminus [|r_Q|]$ denote an arbitrary bijective mapping from $[K] \setminus P(r_Q)$ to $[K] \setminus [|r_Q|]$.  Note that by definition, for all $i, j \in \{ 1,..., Q+1\}$, the image space of $\phi_i$ and $\phi_j$ are disjoint. Moreover, the union of the image space across all bijective mappings $\phi_i$'s is $[K]$. Accordingly, we now use these bijective mappings to construct a permutation $\pi \in \mathcal{Y}$. In particular, let $\pi$ be the permutation such that for all $j \in P(r_1)$, we have $\pi^j = \phi_1(j)$,  for all $i \in \{2, ..., Q\}$ and $j \in P(r_{i}) \setminus P(r_{i-1})$, we have $\pi^j = \phi_i(j)$ , and for all $j \in [K] \setminus P(r_Q)$ we have $\pi^j = \phi_{Q+1}(j)$ . We now need to show that for all $i \in [Q]$, $\pi \in \mathcal{Y}(r_i)$.  

Fix an $i \in \mathcal{Q}$ and consider $r_i$. It suffices to show that for all $j \in P(r_i)$, we have $\pi^j \leq |r_i|$. That is, $\pi$ ranks the labels in $P(r_i)$ in the top $|r_i|$. By the subset property, we have

$$P(r_i)  = P(r_1) \cup  \bigcup_{j = 2}^{i} P(r_{j}) \setminus P(r_{j-1}).$$

Consider some $p \in P(r_i)$. Then, by the equality above, either $p \in  P(r_1)$ or $p \in \bigcup_{j = 2}^{i} P(r_{j}) \setminus P(r_{j-1})$. Suppose $p \in P(r_1)$, then by definition $\pi^p = \phi_1(p) \in [|r_1|]$ and therefore  $\pi^p \leq |r_i|$. Suppose $p \in \bigcup_{j = 2}^{i} P(r_{j}) \setminus P(r_{j-1})$. In particular, suppose $p \in P(r_{j}) \setminus P(r_{j-1}) $ for some $Q \geq j > 1$. Then by definition, $\pi^p = \phi_j(p) \in [|r_{j}|] \setminus [|r_{j-1}|]$ and therefore $\pi^p \leq |r_i|$ since $|r_j| \leq |r_i|$. This shows that for every $j \in P(r_i)$, $\pi$ ranks $j$ in the top $|r_i|$ and therefore $\ell_{\text{0-1}}(\pi, r_i) = 0$. Since $i \in [Q]$ is arbitrary,  this completes the proof as we have shown that $\bigcap_{i=1}^Q \mathcal{Y}(r_i) \neq \emptyset$. 
\end{proof}

\subsection{Ranking Littlestone dimension}
We end this section by defining an equivalent, arguably more natural, dimension that provides a tight quantitative characterization of online multilabel ranking learnability under binary relevance score feedback. The key insight is that we can actually label the edges in the $\texttt{SL}_2$ tree with bit strings instead of sets from $\mathcal{S}(\mathcal{Y})$. This intuition leads to the following dimension for online multilabel ranking.


\begin{definition}[Ranking Littlestone dimension]\label{def:rldim}
\noindent Let $\mathcal{T}$ be a complete $\mathcal{X}$-valued binary tree of depth $d$. The tree $\mathcal{T}$ is shattered by $\mathcal{H} \subseteq \Ycal^{\Xcal}$  if there exists a sequence $(f_1, ..., f_d)$ of edge-labeling functions  $f_t: \{\pm1\}^{t} \rightarrow \mathcal{R}$  such that for every path $\sigma = (\sigma_1, ..., \sigma_d) \in \{\pm1\}^d$, there exists a hypothesis $h_{\sigma} \in \mathcal{H}$ such that for all $t \in [d]$,  $\ell_{\text{0-1}}(h_{\sigma}(\mathcal{T}_t(\sigma_{<t})), f_t(\sigma_{\leq t})) = 0$, but  $f_t((\sigma_{< t}, +1)) \nsubseteq f_t((\sigma_{< t}, -1))$ and  $f_t((\sigma_{< t}, -1)) \nsubseteq f_t((\sigma_{< t}, +1))$. The Ranking Littlestone dimension of $\mathcal{H}$, denoted  $\emph{\texttt{RL}}(\mathcal{H}, \mathcal{S}(\mathcal{Y}))$, is the maximal depth of a tree $\mathcal{T}$ that is shattered by $\mathcal{H}$. If there exists shattered trees of arbitrarily large depth, we say $\emph{\texttt{RL}}(\mathcal{H}, \mathcal{S}(\mathcal{Y})) = \infty$.
\end{definition}

Since bit strings map one-to-one with sets in $\mathcal{S}(\mathcal{Y})$,  $r_1 \nsubseteq r_2, r_2 \nsubseteq r_1$ iff $ \mathcal{Y}(r_1) \cap \mathcal{Y}(r_2) = \emptyset$, and  $\ell_{\text{0-1}}(\pi, r) = 0$ iff $\pi \in \mathcal{Y}(r)$, it follows that $\texttt{SL}_2(\mathcal{H}) = \texttt{RL}(\mathcal{H})$. Corollary \ref{cor:mlrquant} immediately shows that $\texttt{RL}(\mathcal{H})$ provides a tight quantitative characterization of online multilabel ranking learnability in both the realizable and agnostic settings.

\subsection{Online Multilabel Classification}\label{appdx:mlc}

\begin{lemma}[Helly Number of Hamming Balls]\label{lem:helly_mlc}
\noindent Let $\mathcal{Y} = \{0, 1\}^K$  and $\mathcal{S}_q(\mathcal{Y}) = \{\mathcal{B}(y, q): y \in \mathcal{Y}\}$. Then, for all $ q \in [K-1]$, we have 
\[2^{q+1} \leq \emph{\texttt{H}}(\mathcal{S}_q(\mathcal{Y}) \leq \sum_{r=0}^q \binom{K}{r} + 1.\]
\end{lemma}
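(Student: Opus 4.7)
The plan is to prove the two bounds independently, each via an explicit construction. Throughout, let $d_H(\cdot,\cdot) = \ell_H(\cdot,\cdot)$ denote Hamming distance.

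For the lower bound $\texttt{H}(\mathcal{S}_q(\mathcal{Y})) \geq 2^{q+1}$, I will exhibit a family of exactly $2^{q+1}$ Hamming balls whose total intersection is empty yet every $(2^{q+1}-1)$-element subfamily has nonempty intersection. Fix any prefix $p \in \{0,1\}^{K-q-1}$ (using $K \geq q+1$, which is guaranteed by $q \in [K-1]$) and let the centers be $\{y_u : u \in \{0,1\}^{q+1}\}$, where $y_u$ is the bit string with prefix $p$ and suffix $u$. The full intersection is empty because for any candidate point $z$ with suffix $z'$, the center $y_{\bar{z'}}$ (whose suffix is the bitwise complement of $z'$) satisfies $d_H(z, y_{\bar{z'}}) \geq q+1$, so $z \notin \mathcal{B}(y_{\bar{z'}}, q)$. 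For the subfamily omitting center $y_{u_0}$, the point $z^{\star}$ with prefix $p$ and suffix $\bar{u_0}$ lies in every remaining ball: for each $u \neq u_0$, $d_H(z^{\star}, y_u) = d_H(\bar{u_0}, u) = (q+1) - d_H(u_0, u) \leq q$ since $u$ and $u_0$ differ in at least one coordinate.

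For the upper bound $\texttt{H}(\mathcal{S}_q(\mathcal{Y})) \leq V_q + 1$, where $V_q := \sum_{r=0}^q \binom{K}{r}$ is the cardinality of any Hamming ball of radius $q$ in $\{0,1\}^K$, I will use a single-ball covering argument. Given any $\mathcal{C} \subseteq \mathcal{S}_q(\mathcal{Y})$ with $\bigcap_{S \in \mathcal{C}} S = \emptyset$, fix an arbitrary $\mathcal{B}(y_0, q) \in \mathcal{C}$. For each of the $V_q$ points $z \in \mathcal{B}(y_0, q)$, the empty-intersection assumption supplies some $\mathcal{B}(y_z, q) \in \mathcal{C}$ with $z \notin \mathcal{B}(y_z, q)$; select one such ball. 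The subcollection $\mathcal{C}' := \{\mathcal{B}(y_0, q)\} \cup \{\mathcal{B}(y_z, q) : z \in \mathcal{B}(y_0, q)\}$ has cardinality at most $V_q + 1$, and any hypothetical common point $w \in \bigcap_{S \in \mathcal{C}'} S$ would lie in $\mathcal{B}(y_0, q)$ and hence be one of the selected $z$'s, contradicting $w \in \mathcal{B}(y_w, q)$.

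Neither direction is technically difficult once the right objects are in view: the lower-bound family is driven by Hamming-antipodes of centers ranging over a coordinate block of width $q+1$, while the upper bound is a pigeonhole on the points of a single ball. The real obstacle is not in proving either inequality but in closing the gap between them to resolve the paper's conjecture $\texttt{H}(\mathcal{S}_q(\mathcal{Y})) = 2^{q+1}$. Sharpening the upper bound from the $K$-dependent $V_q + 1$ to the $K$-free $2^{q+1}$ would likely require a more sophisticated combinatorial argument, perhaps an isoperimetric inequality on the hypercube or a linear-algebraic $\mathbb{F}_2$ encoding of the centers; this is left open.
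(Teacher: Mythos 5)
Your proof is correct and follows essentially the same route as the paper's: the lower bound uses the identical construction of $2^{q+1}$ centers forming a $(q+1)$-dimensional subcube (constant elsewhere) with the same antipodal-suffix argument for both the empty full intersection and the nonempty subfamily intersections. For the upper bound the paper counts balls containing a fixed point rather than points inside a fixed ball, but by the symmetry of Hamming distance these are the same pigeonhole argument yielding $\sum_{r=0}^q \binom{K}{r} + 1$.
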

\begin{proof}(of Lemma \ref{lem:helly_mlc})
    Fix $q \in [K-1]$ and let $\mathcal{S}_q(\mathcal{Y}) = \{\mathcal{B}(y, q): y \in \mathcal{Y}\}$.
    To see the upperbound, observe that for any bit string $b_1 \in \{0, 1\}^K$, there are $\sum_{r=0}^q \binom{K}{r}$ sets in $\mathcal{S}_q(\mathcal{Y})$ which contain $b_1$. This follows from the fact that $b_1 \in \mathcal{B}(b_2, q)$ if and only if $b_2 \in \mathcal{B}(b_1, q)$. Therefore, $|\{A \in \mathcal{S}_q(Y): b_1 \in A\}| = |\mathcal{B}(b_1, q)| = \sum_{r=0}^q \binom{K}{r}$. The upperbound on $\texttt{H}(\mathcal{S}_q(\mathcal{Y}))$ then follows from the fact that every sequence of sets of size at least $\sum_{r=0}^q \binom{K}{r} + 1$ must have an empty intersection.
 
 To establish the lowerbound, it suffices to construct a family of $2^{q+1}$ Hamming balls that have an empty intersection, but every subfamily of size $2^{q+1}-1$ has a common element.
Let $S = \{y_1, \ldots, y_{2^{q+1}}\} \subset \{0,1\}^K$  be a family of bitstrings that embeds a hypercube of size $q+1$ and is $0$ everywhere else. That is, there exists a set of indices $I \subset [K]$ of size $|I| = q +1$ such that 
$S_{\mid I} = \{0,1\}^{q+1}$ and $S_{\mid\, [K]\backslash I} = 00\ldots 00 $
, where $S_{\mid I}$  denotes the restriction of bitstrings in $S$ to indices in $I$. We will first show that
\[\bigcap_{i=1}^{2^{q+1}} \, B(y_i, q) = \emptyset. \]
To see why this is true, pick a $y \in \{0,1\}^{K}$. Since $S$ embeds a boolean cube in $I$, there exists $i, j \in [2^{q+1}]$ such that $y_{\mid I} = y_{i\, \mid I }$ and $\neg y_{\mid I} = y_{j \, \mid I}$, where $\neg y$ is obtained by flipping every bit in $y$. Given that $|I| = q+1$, we have $\ell_{H}(y, y_j) \geq q+1$ and thus  $y \notin B(y_j, q)$. Since $y \in \{0,1\}^K$ is arbitrary,  $\bigcap_{i=1}^{2^{q+1}} \, B(y_i, q) = \emptyset$.

Next, we will show that for every $j \in [2^{q+1}]$, we have
\[\bigcap_{i \neq j} B(y_i, q) \neq \emptyset. \]
\noindent For each $y_j \in S$, define $\tilde{y}_j \in \{0,1\}^{K}$ such that $ \tilde{y}_{j \, \mid I} = \neg y_{j \, \mid I}$ and $\tilde{y}_{j \,\mid [K] \backslash I} =  00 \ldots 00 =y_{j \,\mid [K] \backslash I} $. Recall that a ball of radius $q$ centered at a vertex $v$ of a $q+1$ dimensional boolean cube contains all vertices except $\neg v$. Thus, $y_i \in B(\tilde{y}_j, q)$ for all $i \neq j$. Therefore,  $\tilde{y}_j \in \bigcap_{i \neq j} B(y_i, q)$, completing our proof. \end{proof}

As a result of Lemma \ref{lem:helly_mlc}, we do not generally have that $\texttt{SL}(\mathcal{H}) = \texttt{SL}_2(\mathcal{H})$. Accordingly, unlike multilabel ranking, the quantitative lowerbound implied by Theorem \ref{thm:agn} does not immediately follow from the structural properties in Theorem \ref{thm:relation}. Instead, Lemma \ref{lem:bsldim_lower} shows that when $K$ is sufficiently large, we are guaranteed that $\texttt{SL}_2(\mathcal{H}) > 0$ for any non-trivial hypothesis class $\mathcal{H} \subseteq \mathcal{Y}^{\mathcal{X}}$, and thus the lowerbound of Theorem \ref{thm:agn} still gives us a meaningful lowerbound scaling with $T$. 

\begin{lemma}[Lowerbound on $\texttt{SL}_2(\mathcal{H})$]\label{lem:bsldim_lower}
\noindent Fix $q \in \mathbbm{N}$ and $K \geq 2q + 1$. Let $\mathcal{Y} = \{0, 1\}^K$, $\Scal_q(\Ycal) = \{\mathcal{B}(y, q): y \in \mathcal{Y}\}$, and $\mathcal{H} \subseteq \mathcal{Y}^{\mathcal{X}}$ be a hypothesis class such that $|\mathcal{H}| \geq 2$. Then, $\emph{\texttt{SL}}_2(\mathcal{H}) \geq 1$. 
\end{lemma}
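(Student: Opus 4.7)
The plan is to exhibit a shattered tree of depth one in the sense of Definition \ref{def:psldim} with $p=2$. This amounts to producing an instance $x\in\Xcal$, two disjoint Hamming balls $\Bcal(z_1,q),\Bcal(z_2,q)\in\Scal_q(\Ycal)$, and two hypotheses $h_1,h_2\in\Hcal$ such that $h_i(x)\in \Bcal(z_i,q)$ for $i\in\{1,2\}$.

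Since $|\Hcal|\geq 2$, I would pick distinct $h_1,h_2\in\Hcal$ and let $x\in\Xcal$ be any instance on which they disagree; set $y_1=h_1(x)$ and $y_2=h_2(x)$, so $y_1\neq y_2$. To obtain disjoint Hamming balls of radius $q$ containing $y_1$ and $y_2$ respectively, I need centers $z_1,z_2\in\{0,1\}^K$ with $\ell_H(y_i,z_i)\leq q$ and $\ell_H(z_1,z_2)\geq 2q+1$ (the latter gives disjointness by the triangle inequality, since any common point would force $\ell_H(z_1,z_2)\leq 2q$). The construction will take $z_i=y_i\oplus e_i$ for carefully chosen ``perturbations'' $e_1,e_2\in\{0,1\}^K$ of weight $\leq q$.

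The key idea is to let $D=\{i\in[K]:y_1^i\neq y_2^i\}$ and $S=[K]\setminus D$, and to choose $e_1,e_2$ supported on \emph{disjoint} subsets of $S$. Writing $d=y_1\oplus y_2$ and $e=e_1\oplus e_2$, since the supports of $d$ and $e$ are then disjoint, we get the clean identity
\[
\ell_H(z_1,z_2)=|d\oplus e|=|D|+|e_1|+|e_2|.
\]
If $|D|\geq 2q+1$, taking $e_1=e_2=0$ already suffices. Otherwise, I need $|e_1|+|e_2|\geq 2q+1-|D|$ with both $|e_i|\leq q$ and disjoint supports inside $S$; this is feasible precisely because $|S|=K-|D|\geq (2q+1)-|D|$ by the assumption $K\geq 2q+1$, and because $2q+1-|D|\leq 2q$ can be split into two pieces of size at most $q$.

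The main (and only) technical point is verifying that this split is always possible in the boundary case $|D|=1$, which is exactly where the hypothesis $K\geq 2q+1$ is tight; the budgets $|e_i|\leq q$ and $|S|\geq 2q$ cooperate just barely. Once $z_1,z_2$ are constructed, the three conditions $y_i\in\Bcal(z_i,q)$, $\Bcal(z_1,q)\cap\Bcal(z_2,q)=\emptyset$, and existence of shattering hypotheses $h_1,h_2$ are immediate by construction, so the tree with root labeled $x$ and edges labeled by $\Bcal(z_1,q),\Bcal(z_2,q)$ witnesses $\texttt{SL}_2(\Hcal)\geq 1$.
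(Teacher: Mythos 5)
Your proposal is correct and follows essentially the same route as the paper's proof: the paper also picks $x$ with $h_1(x)\neq h_2(x)$, takes a set $J$ of $2q+1-p$ agreement coordinates (where $p=|D|$), splits it into two nearly equal halves $J_1,J_2$, and flips $h_i(x)$ on $J_i$ to get the two centers — which is exactly your choice of disjointly supported perturbations $e_1,e_2$ with $|e_1|+|e_2|=2q+1-|D|$, each of weight at most $q$. The only cosmetic difference is that you explicitly separate out the easy case $|D|\geq 2q+1$, which the paper leaves implicit.
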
{}{}

\begin{proof}(of Lemma \ref{lem:bsldim_lower})
Suppose $K \geq 2q + 1$ and $|\mathcal{H}| \geq 2$. Then, there exists a $x \in \mathcal{X}$ and a pair of hypothesis $h_1, h_2 \in \mathcal{H}$ such that $h_1(x) \neq h_2(x)$. Our goal will be to construct a shattered $\texttt{SL}_2$ tree of depth one according to Definition \ref{def:psldim} with the root node being labeled by $x$. To do so, it suffices to find two disjoint balls $S_1, S_2 \in \mathcal{S}_{q}(\mathcal{\mathcal{Y}})$ such that $h_1(x) \in S_1$ and $h_2(x) \in S_2$. We can then label the left and right outgoing edge from $x$ by $S_1$ and $S_2$ respectively.

Let $p$ denote the number of indices where $h_1(x)$ and $h_2(x)$ disagree. Note that since $h_1(x) \neq h_2(x)$, we have $p \geq 1$. Let $J \subset [K]$, $|J| = 2q + 1 - p$ denote an arbitrary subset of the indices where $h_1(x)$ and $h_2(x)$ \textit{agree}. If $2q + 1 - p$ is even,  partition $J$ into two equally sized parts $J_1$ and $J_2$. If $2q + 1 - p$ is odd, partition $J$ into $J_1$ and $J_2$ such that $|J_1| - |J_2| = 1$. For every index in $J_1$ flip the bit in the corresponding position in $h_1(x)$. Let $y_1 \in \mathcal{Y}$ be the bit string resulting from this operation. Likewise, for every index in $J_2$, flip the bit in the corresponding position in $h_2(x)$. Let $y_2 \in \mathcal{Y}$ denote the resulting bitstring. We now claim that the balls $B(y_1, q),  B(y_2, q) \in \mathcal{S}_{q}(\mathcal{Y})$ satisfy the aforementioned properties. 

First, we show that $B(y_1, q) \cap  B(y_2, q) = \emptyset$.  By construction, $y_1$ and $y_2$ differ in the locations where $h_1(x)$ and $h_2(x)$ differ plus all the indices in $J$. Thus, $\ell_H(y_1, y_2) \geq 2q + 1$. Finally, we show that $h_1(x) \in B(y_1, q)$ and $h_2(x) \in B(y_2, q)$. By construction of $y_1$ and $y_2$ and the fact that $p \geq 1$, we get that $\ell_H(h_1(x), y_1) \leq \ceil{\frac{2q + 1 - p}{2}} \leq q$ and $\ell_H(h_2(x), y_2) \leq \ceil{\frac{2q + 1 - p}{2}} \leq q$. Accordingly, we have that $h_1(x) \in B(y_1, q)$ and $h_2(x) \in B(y_2, q)$ as needed. This completes the proof as we have given two disjoint balls, $B(y_1, q)$ and $B(y_2, q)$, such that  $h_1(x) \in B(y_1, q)$ and $h_2(x) \in B(y_2, q)$. \end{proof}

Combining Lemma \ref{lem:bsldim_lower} and Theorems \ref{thm:det_real}, \ref{thm:rand_real}, and \ref{thm:agn} gives a quantitative characterization of online multilabel classification in both the realizable and agnostic settings. 

\begin{corollary}[Quantitative Online Learnability of Multilabel Classification]\label{cor:mlc}
\noindent Fix $q \in \mathbbm{N}$ and let $K \geq 2q + 1$. Let $\mathcal{Y} = \{0, 1\}^K$, $\Scal_q(\Ycal) = \{\mathcal{B}(y, q): y \in \mathcal{Y}\}$, and $\mathcal{H} \subseteq \mathcal{Y}^{\mathcal{X}}$ be a hypothesis class. Then, in the realizable setting,

$$\frac{\emph{\texttt{SL}}_2(\mathcal{H})}{2} \leq \inf_{\mathcal{A}} \, \emph{\texttt{M}}_{\mathcal{A}}(T,\mathcal{H}) \leq \emph{\texttt{SL}}(\mathcal{H}).$$

\noindent In the agnostic setting, 

$$\sqrt{\frac{\emph{\texttt{SL}}_2(\mathcal{H})\,T}{8}} \leq \inf_{\mathcal{A}}\,  \emph{\texttt{R}}_{\mathcal{A}}(T,\mathcal{H}) \leq \emph{\texttt{SL}}(\mathcal{H}) + \sqrt{2\,\emph{\texttt{SL}}(\mathcal{H})\,T \ln(T)}.$$
\end{corollary}

We leave it as an interesting future direction to get matching upper and lowerbounds for online multilabel classification.

\subsection{Online Interval Learning} 
In this section, we expand on Section \ref{sec:app} by providing one more application of set learning to a real-valued setting that we term online interval learning.  Consider an arbitrary instance space $\Xcal$, a range space $\Ycal = [-B, B]$ for some $B > 0$, and a hypothesis class $\Hcal \subseteq \Ycal^{\Xcal}$. We study an online supervised model where, in each round $t \in [T]$, the adversary reveals an example $x_t$, and the learner makes a prediction $\hat{y}_t \in [-B, B]$. The adversary then reveals an interval $[a_t, b_t]$, and the learner suffers the loss $\indicator\big\{\hat{y}_t \notin [a_t, b_t] \big\}$.  This framework models natural scenarios where the ground truth is a range of values instead of a single value. For instance, consider a model that predicts appropriate clothing size using some structural features of a customer.  Instead of one fixed size, there is usually a range of sizes that fits the customer. Since any size outside a particular range is not useful to the customer, the notion of 0-1 mistake is more natural than a regression loss.  In fact, interval-valued feedback is ubiquitous in experimental fields such as natural science and medicine because of the inherent uncertainty in measurement.

 By defining $\Scal(\Ycal) = \big\{[a, b] : -B \leq a < b \leq B\big\}$, a qualitative characterization of online interval learnability in terms of $\texttt{SL}(\Hcal)$ and $\texttt{MS}_{\gamma}(\Hcal)$ follows immediately from Theorems \ref{thm:det_real} and \ref{thm:agn}. Thus, in this section, we instead focus on establishing a quantitative characterization of online interval learnability.  As in ranking, we start by computing $\texttt{H}(\mathcal{S}(\mathcal{Y}))$. 

\begin{lemma}[Helly Number of Intervals]\label{lem:helly_interval}
\noindent Let $\Scal(\Ycal) = \big\{[a, b] \, : 
\, -B \leq a < b \leq B\big\}$. Then, $\emph{\texttt{H}}(\mathcal{S}(\mathcal{Y})) = 2$. 
\end{lemma}

Lemma \ref{lem:helly_interval} is a special case of the celebrated Helly's Theorem (see \cite{radon1921mengen, ECKHOFF1993389}). Since $\texttt{H}(\mathcal{S}(\mathcal{Y})) = 2$, by Theorem \ref{thm:relation}, we know that for all $\gamma \in [0, \frac{1}{2}]$, $\texttt{MS}_{\gamma}(\mathcal{H}) = \texttt{SL}(\mathcal{H}) = \texttt{SL}_2(\mathcal{H})$.  Therefore the $\texttt{SL}_2(\mathcal{H})$ characterizes both deterministic and randomized online interval learnability in the realizable setting. Moreover, we can use Theorems \ref{thm:det_real}, \ref{thm:rand_real}, and \ref{thm:agn} to give Corollary \ref{cor:intquant}, a sharp quantitative characterization of online interval learning in both the realizable and agnostic settings.

\begin{corollary}[Online Interval Learnability]\label{cor:intquant}
\noindent Let $\mathcal{Y} = [-B,B]$, $\Scal(\Ycal) = \Big\{[a, b] \, :\,  -B \leq a < b \leq B\Big\}$, and $\mathcal{H} \subseteq \mathcal{Y}^{\mathcal{X}}$ be a scalar-valued hypothesis class. Then, in the realizable setting,

$$\frac{\emph{\texttt{SL}}_2(\mathcal{H})}{2} \leq \inf_{\mathcal{A}} \, \emph{\texttt{M}}_{\mathcal{A}}(T,\mathcal{H}) \leq \emph{\texttt{SL}}(\mathcal{H}).$$

\noindent In the agnostic setting, 

$$\sqrt{\frac{\emph{\texttt{SL}}_2(\mathcal{H})\,T}{8}} \leq \inf_{\mathcal{A}}\,  \emph{\texttt{R}}_{\mathcal{A}}(T,\mathcal{H}) \leq \emph{\texttt{SL}}(\mathcal{H}) + \sqrt{2\,\emph{\texttt{SL}}(\mathcal{H})\,T \ln(T)}.$$

\end{corollary}

\end{document}